\newtheorem{theorem}{Theorem}
\newtheorem{definition}{Definition}
\newtheorem{lemma}[theorem]{Lemma}
\title{Uncertainty Calibration for Ensemble-Based Debiasing Methods}
\author{
  Ruibin Xiong$^{1,2,3}$\thanks{Equal contribution.}~~,
  Yimeng Chen$^{2,4*}$\thanks{Work done while Yimeng Chen was interning at Institute for AI Industry Research, Tsinghua University.}~~,
  Liang Pang$^{2,5}$,
  Xueqi Cheng$^{1,2}$, \\
  \textbf{Zhiming Ma}$^{2,4}$ and
  \textbf{Yanyan Lan}$^{6}$\thanks{Corresponding author.} \and 
     \textsuperscript{1}CAS Key Laboratory of Network Data Science and Technology, \\
    Institute of Computing Technology, Chinese Academy of Sciences \\
  \textsuperscript{2}University of Chinese Academy of Sciences \textsuperscript{3}Baidu Inc. \\
  
\textsuperscript{4}Academy of Mathematics and Systems Science, Chinese Academy of Sciences \\
%   \textsuperscript{5}Data Intelligence System Research Center,\\ Institute of Computing Technology, Chinese Academy of Sciences, Beijing, China \\
\textsuperscript{5}Data Intelligence System Research Center, \\
    Institute of Computing Technology, Chinese Academy of Sciences \\
  \textsuperscript{6} Institute for AI Industry Research, Tsinghua University \\
  \texttt{\{xiongruibin18, chenyimeng14\}@mails.ucas.ac.cn}, \\
  \texttt{\{cxq, pangliang\}@ict.ac.cn}, \\
  \texttt{mazm@amt.ac.cn,lanyanyan@tsinghua.edu.cn}
%   \{xiongruibin18, chenyimeng14\}@mails.ucas.ac.cn, \\
%   \{chengxueqi, pangliang\}@ict.ac.cn \\
%   \{mazm@amt.ac.cn,lanyanyan\}@tsinghua.edu.cn
}
\begin{document}

\maketitle

\begin{abstract}
Ensemble-based debiasing methods have been shown effective in mitigating the reliance of classifiers on specific dataset bias, by exploiting the output of a bias-only model to adjust the learning target. In this paper, we focus on the bias-only model in these ensemble-based methods, which plays an important role but has not gained much attention in the existing literature. Theoretically, we prove that the debiasing performance can be damaged by inaccurate uncertainty estimations of the bias-only model. Empirically, we show that existing bias-only models fall short in producing accurate uncertainty estimations. Motivated by these findings, we propose to conduct calibration on the bias-only model, thus achieving a three-stage ensemble-based debiasing framework, including bias modeling, model calibrating, and debiasing. Experimental results on NLI and fact verification tasks show that our proposed three-stage debiasing framework consistently outperforms the traditional two-stage one in out-of-distribution accuracy.

% Ensemble based debiasing methods have been shown effective in mitigating the reliance of classifiers on specific dataset bias, by exploiting the output of a bias-only model to adjust the learning target. In this paper, we focus on the bias-only model in these ensemble-based methods, which plays an important role but has not gain much attention in the existing literature. The motivation comes from both theoretical and empirical aspects. Theoretically, we prove that the uncertainty estimation of the bias-only model has a great impact on the debiasing performance. Furthermore, our empirical findings show that existing bias-only models fall short in producing an accurate uncertainty estimation. To tackle this problem, we propose to conduct calibration on the bias-only model, and thus achieve a three-stage ensemble based debiasing framework, including bias modeling, model calibrating, and debiasing. Experimental results on NLI and fact verification tasks show that our proposed three-stage debiasing framework consistently outperform the traditional two-stage one, in terms of out-of-distribution accuracy.

\end{abstract}

\section{Introduction}
% Background: the missing of out-of-distribution generalization，
Machine learning models have achieved remarkable performance on natural language understanding~\citep{devlin2019bert,chen2017enhanced,radford2018improving} and computer vision~\citep{he2015delving,he2016deep}. However, observations have shown that these models have difficulties in generalizing well in out-of-distribution settings~\citep{schuster2019towards,zhang2019paws,beery2018recognition,geirhos2018imagenet}, which limits their applications to real-world scenarios. A major cause of this failure is the reliance of the model on specific \emph{dataset bias}~\citep{tor2011unbias}. For instance, \citet{mccoy2019right} have shown that sentence pairs with high word overlaps in MNLI are easy to be classified as the label `entailment', even if they have different relations. \par

% Classifiers trained on specific datasets will exploit such bias to predict~\citep{mccoy2019right}, as a result fail to generalize on out-of-distribution samples where the correlation between the bias feature and target label changes.\par

% Ensemble based models: main methods on debiasing known biases.
% Several methods have been proposed to tackle the effects of dataset bias
A growing body of literature recognizes debiasing as an important direction in machine learning and natural language processing~\citep{zellers2019hellaswag,belinkov132019adversarial, bender2020climbing,utama2020mind}. Within these works, \emph{ensemble-based debiasing} (EBD) methods~\citep{he2019unlearn,mahabadi2020end,clark2019don,zhang2019selection,cadene2019rubi} have caused considerable interest within the community, as shown promising improvements on the out-of-distribution performance. EBD methods, e.g.,~PoE~\citep{clark2019don}, DRiFt~\citep{he2019unlearn}, and Inverse-Reweight~\citep{zhang2019selection}, usually adopt a two-stage framework. Firstly, a biased predictor is trained based on the bias features only, namely the \emph{bias-only model}. Its output is then utilized to adjust the learning target of the \emph{main model} by using different ensembling strategies. Previous works are mainly limited to designing different ensembling strategies, without considering the bias-only model, which clearly plays an essential role in the whole process.
% Firstly, a biased predictor is trained to capture the dataset bias, namely the \emph{bias-only model}. Its output is then utilized to adjust the learning target of a \emph{main model}, by using different ensembling strategies. Previous works are mainly limited to the design of different ensembling strategies, without consideration of the bias-only model, which clearly plays an important role in the whole process.
% debiasing process.\par

In this paper, we focus on investigating the bias-only model in the EBD methods. We theoretically reveal that the quality of the predictive uncertainty estimation given by the bias-only model is crucial for the debiasing performance of EBD methods. Specifically, we prove that the out-of-distribution accuracy of the debiased model is monotonically decreasing with the calibration error of the bias-only model when such error exceeds a threshold\footnote{This condition is more general when the ground truth labeling based on the signal features has low certainty. Such cases exist in natural language understanding (NLU) tasks, where the ground-truth label for a sample is not unique but inherently forms a distribution, as shown by recent empirical studies~\citep{pavlick2019inherent,nie2020can}. That is why we focus our empirical study on NLU tasks.}. Moreover, by theoretically analyzing the decline of in-distribution performance caused by debiasing, we show the existence of the case when uncertainty calibration can also mitigate such a side-effect. Empirically, we show that bias-only models employed by existing methods on both natural language inference and fact verification tasks fail to produce accurate uncertainty estimations. These findings indicate the critical role of the calibration property of current bias-only models for further improvement of EBD methods. \par
% uncertainty calibration can also mitigate the decrease of in-distribution performance caused by debiasing

%which is defined as the promotion of prediction accuracy on out-of-distribution samples. Specifically, we show that the difference between the bias-only model's confidence and its accuracy lead to an decrease in debiasing performance. In addition, we empirically show bias-only models employed by existing methods fall short in giving well-calibrated predictions, producing meaningless confidence values. \par
% an widely existing problem of machine learning models

% Methods
Motivated by the theoretical analysis and empirical study, we introduce an additional calibration stage into the previous EBD methods. In this stage, the bias-only model is calibrated with model-agnostic calibration methods to obtain more accurate predictive uncertainty estimation. Specifically, two typical calibration methods are used in this paper, i.e.~temperature scaling~\citep{guo2017calibration} and Dirichlet calibration~\citep{kull2019beyond}. After that, the calibrated bias-only model is used to train the main model with off-the-shelf ensembling strategies. In this way, we extend the traditional two-stage EBD framework to a three-stage one, including bias \textbf{Mo}deling, model \textbf{Ca}librating, and \textbf{D}ebiasing, named MoCaD for short.

% Experiment
To demonstrate the effectiveness of our proposed framework, we conduct experiments on four challenging benchmarks for two NLU tasks, i.e.~natural language inference and fact verification. Experimental results show that our framework significantly improves the out-of-distribution performance, as compared with the traditional two-stage one. Moreover, our theoretical results are well verified by the empirical observations in real scenarios. 
%We also provide some detailed analysis on when our framework improves out-of-distribution performance, as well as in-distribution performance.

% Summarize
Our main contributions can be summarized as the following three folds.
\begin{itemize}
\item We explore, both theoretically and empirically, the effect of the bias-only model in the EBD methods. Consequently, a critical problem is revealed: existing bias-only models are poorly calibrated, which will hurt the debiasing performance.

% final debiasing performances for partially predictive signals.
% decreases the promotion in main model's out-of-distribution robustness

\item We propose a model-agnostic three-stage EBD framework to tackle the above problem.

\item We conduct extensive experiments on four challenging datasets for two different tasks, and experimental results show the superiority of our proposed framework as against the traditional two-stage one.
\end{itemize}

\section{Related Work}
\paragraph{Dataset Bias.} Various biases have been found in different NLU benchmarks. For example, models with partial input can perform much better than majority-class baselines in NLI and fact verification datasets~\citep{gururangan2018annotation,poliak2018hypothesis,schuster2019towards}. Many multi-hop questions can be solved by just using single-hop models in the recent multi-hop QA datasets~\citep{min2019compositional,chen2019understanding}. Similar phenomena have been observed in many other tasks, such as reading comprehension~\citep{kaushik2018much} and visual question-answering~\citep{agrawal2016analyzing}. Many models have used such superficial cues to achieve remarkable performance instead of capturing the underlying intrinsic principles in these biased datasets, leading to poor generalization on out-of-distribution datasets, when the relation of bias features and labels are changed~\citep{mccoy2019right,schuster2019towards,liu2020hyponli}.
% Toward the datset bias issues propose the new datset collection methods, and data agumentation

% . To better evaluate the reasonability of model, researcher construct challenge dataset target on the specific biases, models evaluated on these sets often fall back to random baseline performance~\citep{}.
% It computes the element-wise product between the predicted uncertainty of bias-only model and main model then normalize the combined prediction. To dynamically control the amount of bias,
\paragraph{Ensemble-based debiasing (EBD) methods.} EBD methods are a kind of model-agnostic debiasing method to reduce the reliance of models on specific dataset bias. In these methods, a bias-only model is used to assist the debiasing training of the main model. Most EBD methods, e.g.,~PoE~\citep{clark2019don}, DRiFt~\citep{he2019unlearn}, and Inverse-Reweight~\citep{zhang2019selection}, can be formalized as a two-stage framework. It is commonly assumed that the dataset bias is known a-priori. In the first stage, the bias-only model is trained to capture the dataset bias by leveraging the pre-defined bias features. Then the bias-only model is used to adjust the learning target of the main model with different ensembling strategies. Recently, some works start to improve the EBD methods by exploring the bias-only models. For example,~\citet{utama2020towards},~\citet{sanh2021learning}, and~\citet{clark2020learning} focus on relaxing the basic assumption of many EBD methods, i.e., the dataset bias is known a-priori. They exploit different prior knowledge to obtain bias-only models, e.g., models that shallow~\citep{utama2020towards} or with limited capacity~\citep{sanh2021learning,clark2020learning} are considered to be biased. Unlike these works, we theoretically study the essential effect of the bias-only model on the final debiasing performance and show how to improve it in the algorithm design process. Please note that some works~\citep{mahabadi2020end,clark2020learning} have been proposed to jointly learn the bias-only model and the debiased main model in an end-to-end manner. However, Since it is difficult to quantify the impact of the bias-only model in this scheme, we mainly focus on the typical two-stage methods~\citep{clark2019don,he2019unlearn,zhang2019selection,utama2020towards,sanh2021learning}.\par

\section{Formalization of EBD Methods}
\label{sec:back}
% In this section, we introduce some details of the existing EBD methods.
In this section, we formalize EBD methods with an introduction to some related notations. Consider a general classification task, where the target is to map an input value $x \in \mathcal{X}$ of an input random variable $X$ to a target label $y \in \mathcal{Y}$ of a target random variable $Y$. 
We denote features of $x$ that have invariant relations with the label as signal $x^s$, e.g., the sentiment words in sentiment analysis. Conversely, features whose correlation with label $Y$ is spurious and prone to change in the out-of-distribution setting are denoted as bias $x^b$, e.g., the length of input sentences in the NLU tasks. The corresponding random variables are respectively denoted as $X^S$ and $X^B$.
Now suppose that on a training dataset $\mathcal{D}$ where $(X, Y) \sim \mathbb{P}_\mathcal{D}(\mathcal{X}\times \mathcal{Y})$, $X^B$ and $Y$ are spuriously correlated. The goal of debiasing is to learn a classifier that models $\mathbb{P}_\mathcal{D}(Y|X^S)$ with invariant out-of-distribution performance.
% Suppose there exists function $S(\cdot)$ mapping $x$ to its signal features $x^s$ that have invariant relationship with the label, e.g. the sentiment words in sentiment analysis. Conversely, $B(\cdot)$ maps $x$ to features whose correlation with label $Y$ is spurious and prone to change in out-of-distribution setting, e.g. the length of input sentences in the NLU tasks. For brevity we denote random variables $S(X), B(X)$ as $X^S, X^B$ respectively in this paper.

% As in~\citep{clark2019don,he2019unlearn}, suppose $X^S$ and $X^B$ are conditionally independent given $Y$, we have the following decomposition:
The following decomposition forms the theoretical basis for EBD methods: for $\forall x \in \mathcal{X}$, with its corresponding features $X^B = x^b$, $X^S = x^s$,
\begin{equation}
    \mathbb{P}_\mathcal{D}(Y|X=x) \propto \mathbb{P}_\mathcal{D}(Y|X^B=x^b) \mathbb{P}_\mathcal{D}(Y|X^S=x^s)\frac{1}{\mathbb{P}_\mathcal{D}(Y)},
\end{equation}
where $\mathbb{P}_\mathcal{D}(Y|X^B\!=\!x^b)$ is the conditional probability distribution of $Y$ given the value of bias features $X^B$, $\mathbb{P}_\mathcal{D}(Y|X^S=x^s)$ represents the true principle we would like to learn, and $\mathbb{P}_\mathcal{D}(Y|X=x)$ is the conditional distribution of $Y$ given all features, which is usually approximated by directly applying statistical machine learning methods on the training data.
This decomposition can be deduced under the constraint that $X^S \perp\!\!\!\perp X^B |Y$, as shown in~\citep{clark2019don,he2019unlearn,clark2020learning}. We further prove that it also holds with the assumptions in~\citep{zhang2019selection} (See the appendix). The theoretical analysis in this paper is conducted based on the same constraint as in~\citep{clark2019don,he2019unlearn,clark2020learning}.

From this decomposition, the true principle $\mathbb{P}_\mathcal{D}(Y|X^S)$ can be achieved by adjusting the learning target with $\mathbb{P}_\mathcal{D}(Y|X^B)$. This is exactly the basic idea of EBD methods.

Most EBD methods belong to a two-stage framework. In the first stage, a bias-only model $f_B: \mathcal{X} \rightarrow \mathbb{R}^{|\mathcal{Y}|}$ is trained to approximate $\mathbb{P}_\mathcal{D}(Y|X^B)$.
Then it is employed to adjust the learning target in a direct or indirect way. Direct methods such as Inverse-Reweight~\citep{zhang2019selection} reweight the distribution by the inverse of the probability induced by the bias-only model to approximate the true principle. The objective function of the main model $f_M: \mathcal{X} \rightarrow \mathbb{R}^{|\mathcal{Y}|}$ becomes:
\begin{equation}
    \min_{f_M} \mathbb{E}_{X, Y \sim \mathbb{P}_\mathcal{D}} [\frac{1}{p_Y^b(X)} \mathcal{L}_c (Y, \mathbf{p}^m(X))],
\end{equation}
where % $\mathbf{p}^b(X), \mathbf{p}^m(X) \in \mathbb{R}^{|\mathcal{Y}|}$ 
$\mathbf{p}^b(X)=\{p^b_1(X), p^b_2(X), \dots, p^b_{|\mathcal{Y}|}(X)\}$, $\mathbf{p}^m(X)= \{p^m_1(X), p^m_2(X), \dots, p^m_{|\mathcal{Y}|}(X)\}$ denote the uncertainty estimations, i.e.~the prediction probabilities given by $f_B$ and $f_M$ respectively. % $p^b_Y$ denotes the $Y$-th component of $\mathbf{p}^b$.
% where $q_Y^b(X)$ denotes the uncertainty of $Y$ estimated by the output of $f_B(X)$, 
$\mathcal{L}_c$ represents the cross-entropy loss function. % denotes the bias-only model's uncertainty estimate on label $Y$.
On the other hand, indirect methods usually utilize the output of the bias-only model to adjust the loss function of the main model, and the learning target becomes:
\begin{equation}
    \min_{f_M} \mathbb{E}_{X,Y \sim \mathbb{P}_\mathcal{D}} [\mathcal{L}_c (Y, m(\mathbf{q}^b(X) \cdot \mathbf{q}^m(X))],
\end{equation}
where $m$ is the normalization function, and $\mathbf{q}^b(X), \mathbf{q}^m(X)$ are vectors in proportion to $\mathbf{p}^b(X)$ and $\mathbf{p}^m(X)$ respectively. Specifically, PoE~\citep{clark2019don,utama2020towards} directly uses the probability output, DRiFt~\citep{he2019unlearn} and \citet{sanh2021learning} utilizes exponential of the logits output. In Learned-Mixin~\citep{clark2019don}, a variant of PoE, $\mathbf{q}^b(X)$ is changed to $(\mathbf{p}^b(X))^{g(X)}$, where $g(X)$ is a trainable gate function.
% which is a gated probability output of $f_B$. PoE~\citep{clark2019don,sanh2021learning} and  DRiFt~\citep{he2019unlearn}

For both direct and indirect methods, by the property of the cross-entropy loss~\citep{hastie2009elements}, the optimal main model $f_M^*$ satisfies $\mathbf{p}^{m*} \propto \mathbb{P}_\mathcal{D}(Y|X) / \mathbf{p}^b$. Therefore, we have $\mathbf{p}^{m*}\propto\mathbb{P}_\mathcal{D}(Y|X^S)$ when $\mathbf{p}^b \propto \mathbb{P}_\mathcal{D}(Y|X^B)$, which guarantees the effectiveness of the existing EBD methods. Please note that Learned-Mixin does not satisfy this property due to the trainable gate function.

\section{Analysis of the Bias-only Model}
Bias-only models are critical to EBD methods, since their outputs are used to help recover the unbiased distribution. However, far too little attention has been paid to them in previous research. 
In this section, we theoretically quantify the effect of bias-only outputs on the final debiasing performance and empirically show the weakness of existing bias-only models.

\subsection{Theoretical Analysis}
% As an important fact to notice, the theoretical justification of EBD methods supposes the difference between $\mathbb{P}_\mathcal{D}(Y|X=x)$ and $\mathbb{P}_\mathcal{D}(Y|X^S=x_s)$, which represents the case when the signal features are \emph{partially predictive}.
According to the discussion in Section~\ref{sec:back}, the optimal main model $f_M^*$ induces the following conditional probability:
\begin{equation}
\mathbb{P}_{\mathcal{D},f_M^*}(Y\!=\!i|X) := \frac{\mathbb{P}_\mathcal{D}(Y\!=\!i|X) / p^b_i(X)}{\sum_{j \in \mathcal{Y}} \mathbb{P}_\mathcal{D}(Y\!=\!j|X) / p^b_j(X)}.
\end{equation}
% where $p^b_i(X)$ denotes the uncertainty of $X$ being classified as label $i$, estimated by the bias-only model $f_B$. % Note that here we exclude the case when $f_B$ represents the uncertainty output with a trainable gate function as in Learned-Mixin.
For arbitrary $x \in \mathcal{X}$, we define % the following three labels predicted by different conditional probabilities:
\begin{align}
Y(x) := \mathrm{argmax}_{i \in \mathcal{Y}}\: \mathbb{P}_\mathcal{D}(Y=i|X^S=x^s),
\tilde{Y}(x) := \mathrm{argmax}_{i \in \mathcal{Y}}\: \mathbb{P}_{\mathcal{D},f_M^*}(Y=i|X=x).
\end{align}
% We can see that $Y(X)$ stands for the intrinsical true label, $\hat{Y}(X)$ represents the ordinary label trained without debiasing, and $\tilde{Y}(X)$ is the label trained with the debiased main model.
Here $Y(x)$ stands for the predicted label given by the intrinsic principle, and $\tilde{Y}(x)$ is the label prediction given by the debiased main model.
% $\hat{Y}(x)$ represents the label prediction given by the ordinary trained model without debiasing
With these notations, the debiasing performance can be defined as $\mathbb{E}_{X \sim \mathbb{P}_\mathcal{D}(X)}(\tilde{Y}(X) = Y(X))$.
% the difference of the predicted accuracy with and without debiasing, represented as $\Delta_{debias} = \mathbb{P}_\mathcal{D}(\tilde{Y}(X) = Y(X)) - \mathbb{P}_\mathcal{D}(\hat{Y}(X) = Y(X))$.
As the major factor related to the bias-only model is $p^b_i(X)$, i.e.~the uncertainty estimation, in the concerned quantities $\tilde{Y}(X)$, we investigate the effect of the bias-only model on the debiasing performance from this aspect.

% Intuitively, the precision of $f_B$ approximating $\mathbb{P}_\mathcal{D}(Y|X^B)$ directly affects the debiasing. However, without access to the ground-truth $\mathbb{P}_\mathcal{D}(Y|X^B)$, it is impractical to directly measure the deviation of $f_B$ from $\mathbb{P}_\mathcal{D}(Y|X^B)$. Instead, we consider the quality of the uncertainty estimation $p^b_i(X)$, which is an intrinsic property of $f_B$. We adopt the calibration error~\citep{guo2017calibration} as a practical statistic to quantify this quality.

% We consider the quality of $p^b_i(X)$ as an uncertainty estimation~\citep{lak2017simple,wall2019onmixup}, i.e. if this quantity describes the true precision of the induced prediction. We adopt the calibration error~\citep{guo2017calibration} as a practical statistic to quantify this quality. Without loss of generality, we consider the binary classification problem with $\mathcal{Y} =  \{0, 1\}$. To divide and conquer, we conduct the theoretical analysis on the set of samples where the bias-only model generates the same uncertainty estimation, i.e.~$\mathcal{S}_{f_B}(l) := \{ x| p^b_0(x) = l \},\forall l\in[0,1]$. The concerned calibration error is then defined as $|l - \mathbb{P}_\mathcal{D}(Y=0|\mathcal{S}_{f_B}(l))|$.

Without loss of generality, we consider the binary classification problem with $\mathcal{Y} =  \{0, 1\}$ and balanced label distribution. To divide and conquer, we conduct the theoretical analysis on a set of samples, where the bias-only model generates the same uncertainty estimation, i.e.~$\mathcal{S}_{f_B}(l) := \{ x| p^b_0(x) = l \},\forall l\in[0,1]$. Specifically, the quality of the uncertainty estimation of the bias-only model on $\mathcal{S}_{f_B}(l)$ can be measured by the calibration error defined as $|l - \mathbb{P}_\mathcal{D}(Y=0|\mathcal{S}_{f_B}(l))|$. The debiasing performance on $\mathcal{S}_{f_B}(l)$ is defined as $\mathbb{P}_\mathcal{D}(\{x \in \mathcal{S}_{f_B}(l)|\tilde{Y}(x) = Y(x)\})$, i.e. the probability of the subset of $\mathcal{S}_{f_B}(l)$ on which the main model gives the same prediction as the intrinsic principle.

% To estimate an universal effect on the data distribution, we need assumption on the distribution of $\mathbb{P}_\mathcal{D}(Y|X^S)$. Specifically, we assume that the distribution of $\mathbb{P}_\mathcal{D}(Y=0|X^S)$ is symmetric about 0.5, which means the certainty extent of predictive signals for each class is balanced.

The following theorem formalizes a precise result. Specifically, the debiasing performance is a monotonically decreasing function of the calibration error when it exceeds a deviation threshold $\delta(l_0, \epsilon, \alpha)$. Here $\alpha := \min_{X^S} \max_{i \in \{0,1\}} \mathbb{P}_\mathcal{D}(Y=i|X^S)$ denotes the global certainty level of the true principle $\mathbb{P}_\mathcal{D}(Y|X^S)$. 
% For readability, we show an informal version here. The precise result is presented in App. along with the proof.

\begin{theorem}
\label{thm:interval}
For any $l\in[0,1]$, assume that $\exists l_0$ s.t. $\mathbb{P}_\mathcal{D}(Y=0|X^B)\!\!\in\!\! (l_0-\epsilon, l_0+\epsilon)$ when $X$ takes values in $\mathcal{S}_{f_B}(l)$. If the calibration error $|l - \mathbb{P}_\mathcal{D}(Y=0|\mathcal{S}_{f_B}(l))| \ge \delta(l_0, \epsilon, \alpha)>0$, the debiasing performance  $\mathbb{P}_\mathcal{D}(\{x \in \mathcal{S}_{f_B}(l)|\tilde{Y}(x) = Y(x)\})$ declines as $|l - \mathbb{P}_\mathcal{D}(Y=0|\mathcal{S}_{f_B}(l))|$ increases, where $\delta(l_0, \epsilon, \alpha)$ is a constant dependent with $l_0$, $\epsilon$ and $\alpha$. When $\alpha < \frac{1}{2} + \frac{\epsilon}{2l_0(1-l_0)+2\epsilon^2}$, $0 \le \delta(l_0, \epsilon, \alpha) < 2\epsilon$, where $2\epsilon \le \frac{\epsilon}{2l_0(1-l_0)+2\epsilon^2} < \frac{1}{2}$. Otherwise $C < \delta(l_0, \epsilon, \alpha) < 2\epsilon + C$, where $0 < C := l_0-\epsilon - \frac{l_0+\epsilon}{(l_0+\epsilon) + (1-l_0-\epsilon)\frac{\alpha}{1-\alpha}}$, which increases as $\alpha$ increases.
\end{theorem}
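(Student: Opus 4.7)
My approach is to reduce the monotonicity claim to a one-dimensional geometric statement about an interval of signal-feature posteriors that grows as $|l-l_0|$ grows. First I would substitute the Bayes decomposition $\mathbb{P}_\mathcal{D}(Y|X)\propto\mathbb{P}_\mathcal{D}(Y|X^B)\mathbb{P}_\mathcal{D}(Y|X^S)/\mathbb{P}_\mathcal{D}(Y)$ into $\mathbb{P}_{\mathcal{D},f_M^*}(Y=0|X=x)$. Writing $a:=\mathbb{P}_\mathcal{D}(Y=0|X^B=x^b)$, $s:=\mathbb{P}_\mathcal{D}(Y=0|X^S=x^s)$, and $l=p^b_0(x)$, and using the balanced prior, a short computation shows
\begin{equation*}
\tilde{Y}(x)=0 \iff s>\sigma(a,l), \qquad \sigma(a,l):=\frac{(1-a)\,l}{a+l-2al},
\end{equation*}
while $Y(x)=0\iff s>1/2$. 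Two structural properties drive everything: $\sigma(l,l)=1/2$ (so perfect calibration on a sample is always harmless), and $\partial_a\sigma<0$, $\partial_l\sigma>0$. Consequently the mis-classified subset of $\mathcal{S}_{f_B}(l)$ is exactly those $x$ whose $s(x)$ lies strictly between $1/2$ and $\sigma(a(x),l)$.

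The second step is to sandwich this bad region. Monotonicity of $\sigma$ in $a$ together with the hypothesis $a\in(l_0-\epsilon,l_0+\epsilon)$ gives $\sigma(a,l)\in\bigl(\sigma(l_0+\epsilon,l),\,\sigma(l_0-\epsilon,l)\bigr)$, and by definition of $\alpha$ the signal posterior $s$ takes values only in $[0,1-\alpha]\cup[\alpha,1]$ with positive probability. Thus the bad mass is zero as long as the sandwich interval is contained in the gap $(1-\alpha,\alpha)$, and is strictly positive otherwise. Monotonicity of $\sigma(l_0\pm\epsilon,l)$ in $l$ then immediately implies that, once the positive-bad-mass regime is entered, moving $l$ further from $l_0$ strictly widens the bad $s$-interval and hence strictly reduces $\mathbb{P}_\mathcal{D}(\{x\in\mathcal{S}_{f_B}(l):\tilde Y(x)=Y(x)\})$, which establishes the monotonicity assertion of the theorem.

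Pinning down $\delta(l_0,\epsilon,\alpha)$ is then an explicit calculation. Solving $\sigma(l_0+\epsilon,l)=1-\alpha$ (the case $l<l_0$; the opposite case is symmetric) gives the critical value
\begin{equation*}
l^*_c=\frac{l_0+\epsilon}{(l_0+\epsilon)+(1-l_0-\epsilon)\,\tfrac{\alpha}{1-\alpha}},
\end{equation*}
which is exactly the expression appearing inside $C$ in the theorem. A direct computation at the other boundary yields
\begin{equation*}
\sigma(l_0-\epsilon,\,l_0+\epsilon)=\tfrac{1}{2}+\frac{\epsilon}{2l_0(1-l_0)+2\epsilon^2},
\end{equation*}
which tells us which regime we are in: if $\alpha$ falls below this value, the critical $l$ already sits inside $(l_0-\epsilon,l_0+\epsilon)$ and the threshold satisfies $0\le\delta<2\epsilon$; otherwise $l^*_c$ lies at distance $C=l_0-\epsilon-l^*_c>0$ outside the $\epsilon$-window and $\delta\in(C,C+2\epsilon)$. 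The $\pm 2\epsilon$ slack comes from converting the natural threshold on $|l-l_0|$ into one on the stated calibration error $|l-\mu(l)|$ with $\mu(l):=\mathbb{P}_\mathcal{D}(Y=0|\mathcal{S}_{f_B}(l))$, using the $\epsilon$-bound on $\mathbb{P}_\mathcal{D}(Y=0|X^B)$ across the level set.

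The main obstacle is precisely this last translation. Since $\mu(l)=\mathbb{E}[\mathbb{P}_\mathcal{D}(Y=0|X)\mid\mathcal{S}_{f_B}(l)]$ depends on the \emph{full} posterior, and in particular on the conditional distribution of the signal features inside $\mathcal{S}_{f_B}(l)$, bounding $|\mu(l)-l_0|$ does not follow mechanically from the $\epsilon$-bound on $a$. I expect the cleanest route is to expand $\mathbb{P}_\mathcal{D}(Y=0|X)=as/(as+(1-a)(1-s))$ via the Bayes decomposition inside the conditional expectation, and to observe that this quantity is Lipschitz in $a$ around $l_0$ with a constant controllable uniformly in $s\in[0,1]$; the hypothesis then yields the required $\mathcal{O}(\epsilon)$ control and justifies the uniform $2\epsilon$ slack. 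A secondary subtlety, needed for \emph{strict} rather than merely weak monotonicity, is ruling out pathological atoms of the distribution of $s$ on $\mathcal{S}_{f_B}(l)$ precisely at the moving endpoint of the bad interval; this should follow from the generic setup but must be stated explicitly.
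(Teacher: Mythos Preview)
Your approach is correct and is, at its core, the same argument as the paper's: both show that the debiasing error on the fixed level set $\mathcal{S}_{f_B}(l)$ is monotone in $l$ once $l$ leaves a neighborhood of $l_0$, and both locate the regime boundary at the same explicit values (your $l^*_c$ and $\sigma(l_0-\epsilon,l_0+\epsilon)$ coincide with the paper's $L_{\alpha,l_0+\epsilon}$ and $C_\alpha$). The bookkeeping differs: the paper parameterizes by the \emph{full} posterior $a=\mathbb{P}_\mathcal{D}(Y=0\mid X)$ and the bias posterior $b$, first proving a lemma that recovers the signal posterior $\tilde s_{a,b}=\frac{a(1-b)}{a(1-b)+b(1-a)}$ and then tracking the error via the jump $\Delta E(a)=(1-2\mathcal{R}(a))\,p_f(a\mid l)$ as $l$ crosses each full-posterior level $a$. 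You parameterize directly by the signal posterior and the bias posterior, obtaining the threshold $\sigma$ and a ``bad $s$-interval'' that widens with $|l-l_0|$. These are dual presentations of the same geometry; yours is arguably more transparent and avoids the auxiliary lemma.

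One correction: the step you flag as the ``main obstacle'' is in fact immediate. Because the bias-only model is a function of $X^B$ alone, the level set $\mathcal{S}_{f_B}(l)$ is $\sigma(X^B)$-measurable, and by the tower property
\[
\mu(l)=\mathbb{P}_\mathcal{D}\bigl(Y=0\mid \mathcal{S}_{f_B}(l)\bigr)=\mathbb{E}\bigl[\mathbb{P}_\mathcal{D}(Y=0\mid X^B)\,\big|\,\mathcal{S}_{f_B}(l)\bigr]\in(l_0-\epsilon,\,l_0+\epsilon)
\]
directly from the hypothesis; there is no dependence on the signal features and no Lipschitz argument is needed. Your identity $\mu(l)=\mathbb{E}[\mathbb{P}_\mathcal{D}(Y=0\mid X)\mid\mathcal{S}_{f_B}(l)]$ is also true but takes the hard road; the paper dispatches this step in a single line.
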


The threshold in this theorem depends on latent constants $l_0$, $\epsilon$, and $\alpha$. Here $l_0$ and $\epsilon$ define the range of $\mathbb{P}_\mathcal{D}(Y=0|X^B)$ on $\mathcal{S}_{f_B}(l)$. As these constants are related to the posterior characteristics of $f_B$, we verify the generality of such condition by empirical facts in Section~\ref{sec:exp}. Note that the deviation threshold decreases as the certainty level $\alpha$ decreases. That means the same calibration error is more likely to exceed the threshold under smaller $\alpha$, resulting in a more considerable decrease in debiasing performance. As a result, the condition in Theorem~\ref{thm:interval} is more general and significant when the true principle $\mathbb{P}_\mathcal{D}(Y|X^S)$ has low certainty, for example, in the NLU tasks as supported by empirical evidence in~\citep{pavlick2019inherent,nie2020can}.

We also theoretically analyze the effect of the bias-only model on the in-distribution performance, which is defined as $\mathbb{E}_{X \sim \mathbb{P}_\mathcal{D}(X)}(\tilde{Y}(X) = \hat{Y}(X))$, where $\hat{Y}(x) := \mathrm{argmax}_{i \in \mathcal{Y}}\: \mathbb{P}_\mathcal{D}(Y=i|X=x)$ denotes the label given by the ideal predictor on $\mathcal{D}$. The result is shown in the following theorem.
\begin{theorem}
\label{thm:output}
For any $X$, $\tilde{Y}(X) \neq \hat{Y}(X)$ if and only if $p^b_{\hat{Y}(x)}(x) > \mathbb{P}_\mathcal{D}(Y=\hat{Y}(x) |X = x)$.
% there exists $j \in \mathcal{Y}$, s.t. $\mathbb{P}_\mathcal{D}(Y=\hat{Y}(X) |X) / \mathbb{P}_\mathcal{D}(Y=j |X) < p^b_{\hat{Y}(X)}(X) / p^b_j(X)$.
\end{theorem}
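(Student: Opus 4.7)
The plan is to unpack both predictors using their definitions, invoke the explicit formula for the debiased posterior derived in Section~\ref{sec:back}, and reduce the statement to a single scalar inequality by exploiting the binary classification structure. Throughout I write $k := \hat{Y}(x)$ and let $k'$ denote the other class, so $\mathbb{P}_\mathcal{D}(Y=k|X=x) \ge 1/2 \ge \mathbb{P}_\mathcal{D}(Y=k'|X=x)$.

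First I would observe that $\tilde{Y}(x) \neq \hat{Y}(x)$ is equivalent to $\tilde{Y}(x) = k'$, i.e.\ to the strict inequality $\mathbb{P}_{\mathcal{D},f_M^*}(Y=k'|X=x) > \mathbb{P}_{\mathcal{D},f_M^*}(Y=k|X=x)$. Substituting the explicit expression
\begin{equation}
\mathbb{P}_{\mathcal{D},f_M^*}(Y=i|X=x) = \frac{\mathbb{P}_\mathcal{D}(Y=i|X=x)/p^b_i(x)}{\sum_{j\in\mathcal{Y}} \mathbb{P}_\mathcal{D}(Y=j|X=x)/p^b_j(x)},
\end{equation}
the shared normalizer cancels, and the inequality collapses to $\mathbb{P}_\mathcal{D}(Y=k'|X=x)/p^b_{k'}(x) > \mathbb{P}_\mathcal{D}(Y=k|X=x)/p^b_k(x)$, or equivalently to the likelihood-ratio form $p^b_k(x)/p^b_{k'}(x) > \mathbb{P}_\mathcal{D}(Y=k|X=x)/\mathbb{P}_\mathcal{D}(Y=k'|X=x)$.

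Next I would use the binary constraints $p^b_k(x)+p^b_{k'}(x) = 1$ and $\mathbb{P}_\mathcal{D}(Y=k|X=x)+\mathbb{P}_\mathcal{D}(Y=k'|X=x) = 1$, together with the elementary monotonicity fact that for $a,b \in (0,1)$, $a/(1-a) > b/(1-b)$ iff $a > b$. This turns the ratio inequality into the scalar inequality $p^b_k(x) > \mathbb{P}_\mathcal{D}(Y=k|X=x)$, which is exactly $p^b_{\hat{Y}(x)}(x) > \mathbb{P}_\mathcal{D}(Y=\hat{Y}(x)|X=x)$. Since every implication in the chain is an iff, both directions of the theorem follow simultaneously.

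There is no deep obstacle here; the argument is essentially algebraic manipulation of the optimality formula for $f_M^*$. The only care needed is to note that $p^b_i(x)$ and $\mathbb{P}_\mathcal{D}(Y=i|X=x)$ are in $(0,1)$ so that the ratio rearrangements and the monotonicity fact are valid, and to handle tie-breaking in the $\arg\max$ consistently with the strict inequality in the statement.
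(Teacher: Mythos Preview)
Your proposal is correct and follows essentially the same route as the paper's own proof: both unpack $\tilde{Y}$ via the optimality formula for $f_M^*$, cancel the common normalizer, and use the binary constraints $p^b_k+p^b_{k'}=1$ and $\mathbb{P}_\mathcal{D}(Y=k|X)+\mathbb{P}_\mathcal{D}(Y=k'|X)=1$ to collapse the ratio inequality to the stated scalar comparison. Your write-up is in fact a bit more explicit than the paper's (you spell out the monotonicity of $a\mapsto a/(1-a)$ where the paper jumps directly), but the underlying argument is identical.
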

%The following theorem shows how the uncertainty output $p^b(X)$ of the bias-only model will cause the difference between $\hat{Y}(X)$ and $\tilde{Y}(X)$.

Theorem~\ref{thm:output} gives a possible explanation for the decrease of in-distribution performance of EBD debiased models: the in-distribution error occurs when the predictive uncertainty estimation of the bias-only model on $\hat{Y}(x)$ is higher than the conditional probability of $\hat{Y}(x)$. That indicates that the in-distribution error is non-decreasing as the range of the uncertainty estimation of bias-only models increases. As an important case, when the bias-only model is over-confident~\citep{guo2017calibration}, decreasing its calibration error can improve both the in-distribution and out-of-distribution performance of the debiased model according to the two theorems.

% This indicates that if the confidence, i.e. the uncertainty estimation of the bias-only model on the predicted label is reduced, the in-distribution error of the main model can be decreased.
% model debiased with a more confident bias-only model may have lower in-distribution performance.
% Though the conditional probability is unknown, we can reduce the error by controlling the uncertainty estimation.
%, corresponding to the well known over-confidence phenomenon in machine learning\citet{guo2017calibration}.

To sum up, our theoretical study shows that both debiasing and in-distribution performances of the EBD methods are affected by the uncertainty estimation of the bias-only models. Please note that both Theorem~\ref{thm:interval} and \ref{thm:output} can be generalized to multi-class scenarios, with a more complex form. For simplicity, we only discuss the binary class case.

\begin{figure}[t]
  \centering
  \subfigure[MNLI]{
      \includegraphics[scale=0.4]{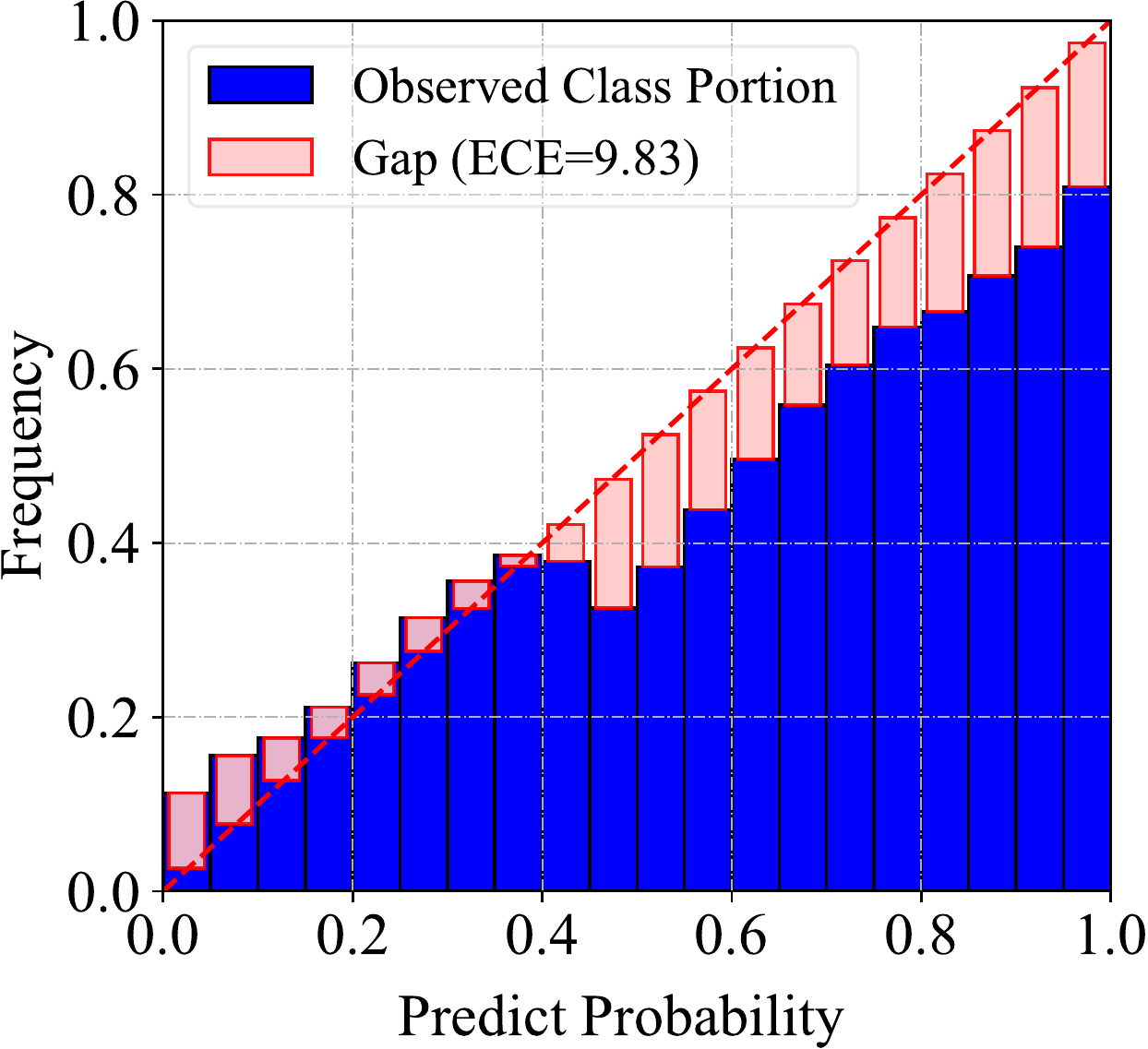}
  }
    \subfigure[FEVER]{
      \includegraphics[scale=0.4]{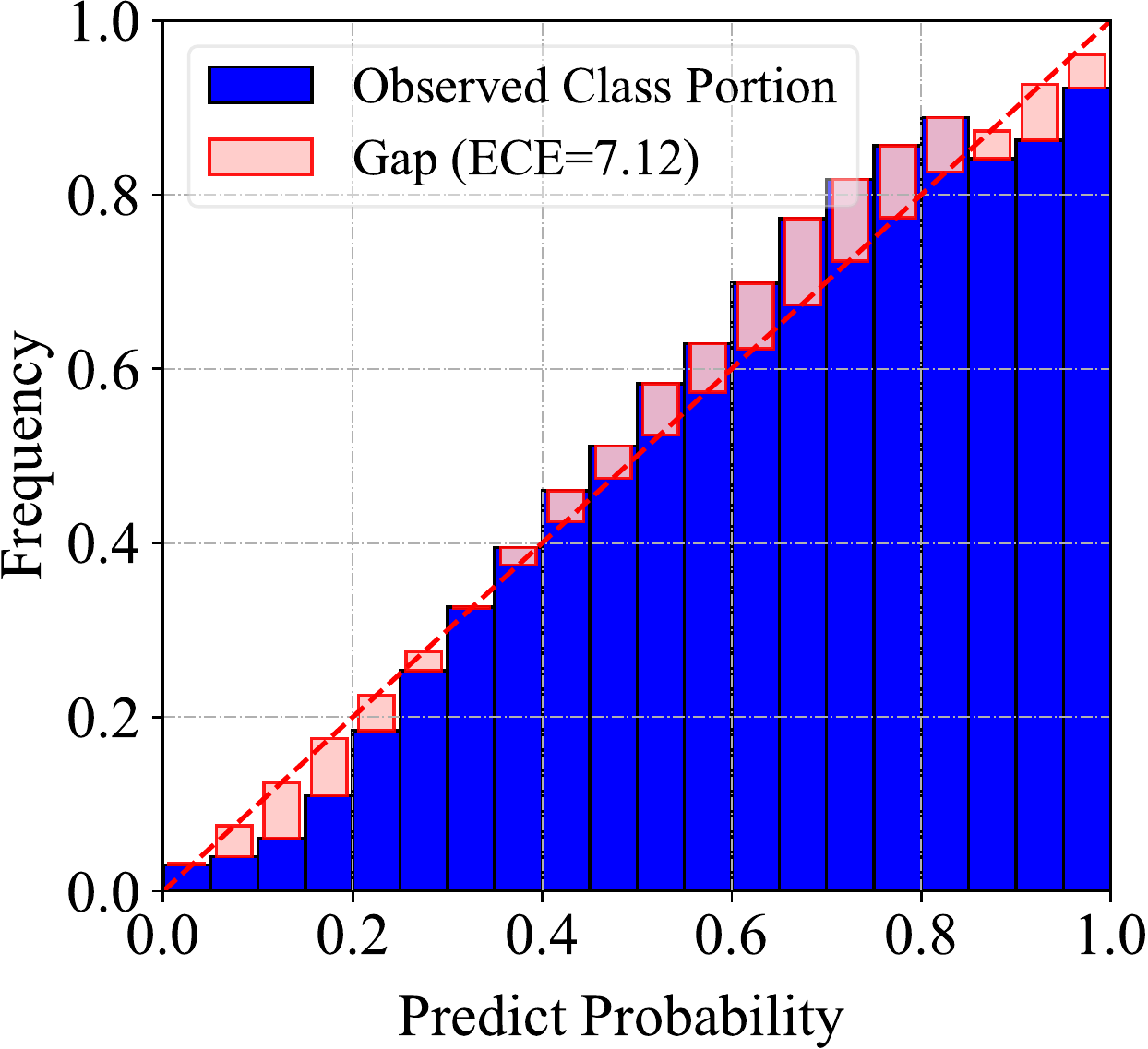}
  }
    \caption{Reliability diagrams of the bias-only models on MNLI and FEVER. The x-axis is the predictive probability of the bias-only model, and the y-axis is the frequency. The wide blue bars show the weighted average of the observed class portion to all classes within each bin, and the narrow red bars show the gap between the observed class portion and the predictive probability of the bias-only model. \label{fig:calibration}}
\vspace{-10pt}
\end{figure}
% \begin{figure}[t]
%   \begin{center}
%     \includegraphics[scale=0.4]{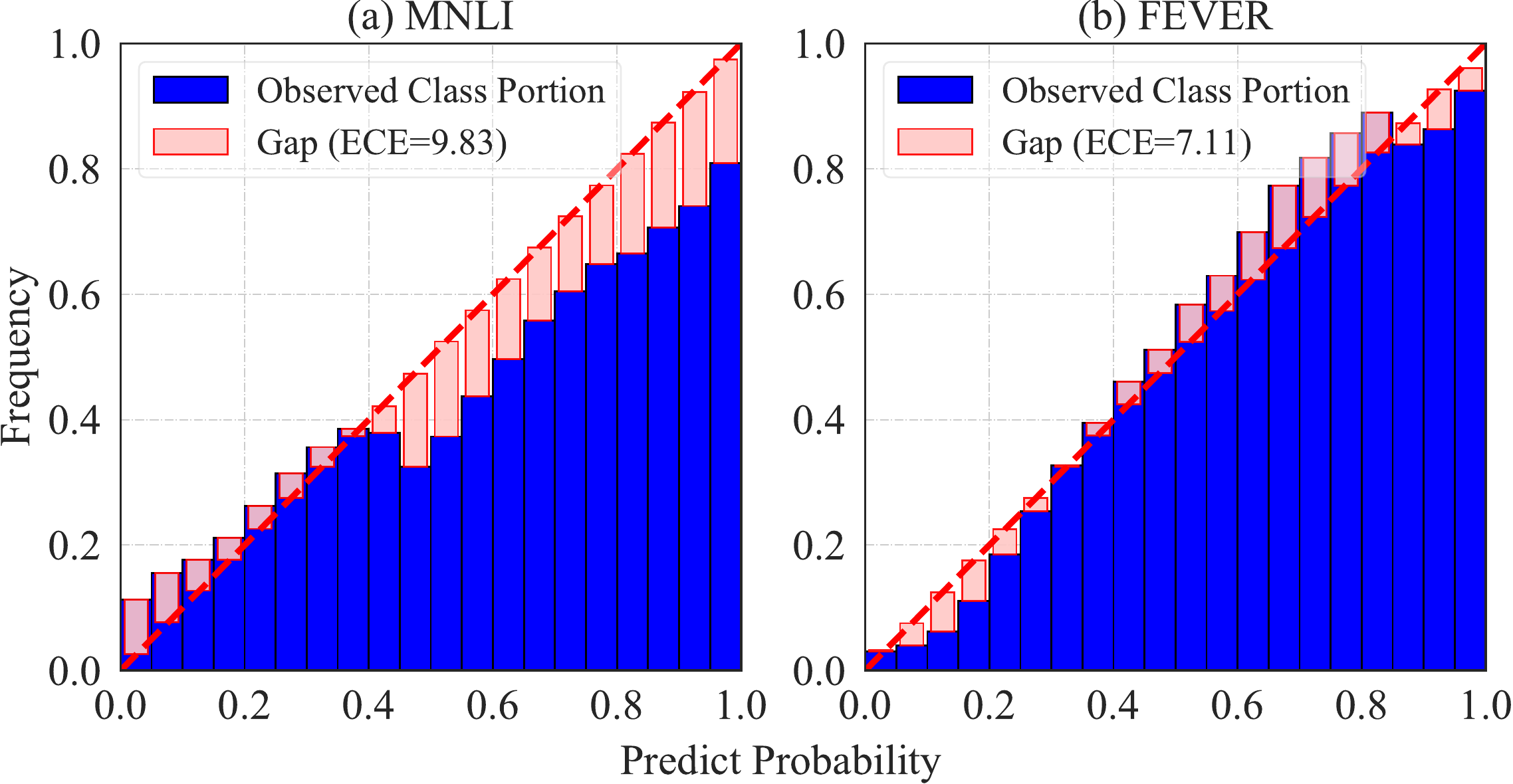}
%     \caption{Reliability diagrams of the bias-only models on MNLI and FEVER. \label{fig:calibration}}
%   \end{center}
% \vspace{-10pt}
% \end{figure}

\subsection{Empirical Analysis}
% It is well-known that NLL-loss trained deep learning models suffer from poor-calibration problems~\citep{guo2017calibration},
According to some recent machine learning studies, the uncertainty estimations of many widely used machine learning classifiers are not reliable~\citep{lak2017simple,guo2017calibration,wall2019onmixup,vaic19evaluat}. This indicates that the existing bias-only classifiers may fail to produce a good uncertainty estimation, which can hurt the debiasing performance, as demonstrated by our theoretical results. \par

To quantify the effect, we further conduct an empirical study to demonstrate the quality of the existing bias-only models with respect to the uncertainty estimation. Specifically, we experiment on two typical public datasets, MNLI and FEVER. Their experimental settings and detailed analysis can be found in Section~\ref{subsec:set}. For MNLI, we consider the syntactic bias~\citep{mccoy2019right} and use hand-crafted features to train a bias-only model, the same as in~\citep{clark2019don}. For FEVER, we consider the claim-only bias~\citep{schuster2019towards} and train a claim-only model as the bias-only model, as in~\citep{utama2020mind}. After that, we use the classwise reliability diagram~\citep{kull2019beyond} to check its calibration error based on data binning. We adopt the classwise expected calibration error~\citep{kull2019beyond} as a measure to quantify the quality of the uncertainty estimation, denoted as ECE for short, with its lower value indicates better-calibrated uncertainty estimation. \par

Now we introduce our experimental results. The classwise reliability diagrams on MNLI and FEVER training sets are plotted in Figure~\ref{fig:calibration}(a) and Figure~\ref{fig:calibration}(b), respectively. For perfectly calibrated predictions, the curve in a reliability diagram should be as close as possible to the diagonal. Therefore, the deviation from the diagonal represents the calibration error. From the results, we can see that 
existing bias-only models suffer from inaccurate uncertainty estimation problems on both datasets.
% which can hurt the debiasing performances, as shown in our theoretical studies. 

\section{The MoCaD Framework}
% We propose to use calibration methods to control the confidence deviation in Theorem~\ref{thm:interval}. \par
% We measure the calibration on binned intervals of the outputs of $f_B$.
To overcome the unreliable predictive uncertainty problem, we introduce a calibration operation to the bias-only model, achieving a \textbf{Mo}deling, \textbf{Ca}librating and \textbf{D}ebiasing framework, named MoCaD for short. Our framework consists of three stages. Firstly, we train a bias-only model to model $\mathbb{P}_\mathcal{D}(Y|X^B)$. Secondly, we use the model-agnostic calibration methods to improve the calibration error of the bias-only model. The calibrated bias-only model is finally employed to conduct the debiasing process through the existing ensembling strategies. 

\subsection{Bias Modeling}
In the first stage, we train a bias-only model to approximate $\mathbb{P}_\mathcal{D}(Y|X^B)$, similar to previous works~\citep{clark2019don,he2019unlearn}. When the dataset bias is identified, i.e.~bias features $X_B$ are known a-priori~\citep{clark2019don,he2019unlearn}, the bias-only model can be obtained by only using the pre-defined $X_B$ to predict label $y$ with cross-entropy loss. For example, in NLI, many specific linguistic phenomena in hypothesis sentences such as negation are highly correlated with certain inference classes~\citep{poliak2018hypothesis}. In this case, hypothesis sentences are used as inputs to train an NLI model as a bias-only model. When the dataset bias is unknown, a `shallow' model or a `weak' model can be built as the bias-only model, as in~\citep{utama2020towards,sanh2021learning}.

% In the first stage, we train a bias-only model to approximate $\mathbb{P}_\mathcal{D}(Y|X^B)$, similar to previous works~\citep{mahabadi2020end,clark2019don}. It is commonly assumed that the dataset bias has been identified, i.e.~bias features $X_B$ are known a-priori~\citep{mahabadi2020end,clark2019don}. Then a bias-only model can be obtained by only using the pre-defined $X_B$ to predict label $y$ with cross-entropy loss. For example, in NLI, many specific linguistic phenomena in hypothesis sentences such as negation are highly correlated with certain inference classes~\citep{poliak2018hypothesis}. In this case, we can use hypothesis sentences as inputs to train an NLI model as a bias-only model.

% In the first stage, we train a bias-only model to approximate $\mathbb{P}_\mathcal{D}(Y|X^B)$, similar to previous works~\citep{mahabadi2020end,clark2019don}. It is commonly assumed that the dataset bias has been identified, i.e.~bias features $X_B$ are known a-priori~\citep{mahabadi2020end,clark2019don}. Then a bias-only model can be obtained by only using the pre-defined $X_B$ to predict label $y$ with cross-entropy loss. For example, in NLI, many specific linguistic phenomena in hypothesis sentences such as negation are highly correlated with certain inference classes~\citep{poliak2018hypothesis}. In this case, we can use hypothesis sentences as inputs to train an NLI model as a bias-only model.

\subsection{Model Calibrating}
We propose to utilize model-agnostic calibration methods to improve the calibration error of the bias-only models. Specifically, two typical calibration methods, temperature scaling~\citep{guo2017calibration} and Dirichlet calibrator~\citep{kull2019beyond}, are used in this paper. The calibrated bias-only model is denoted as $\tilde{f}_B$.

Temperature scaling is a simple-but-effective calibration method. It learns a single scalar parameter `temperature' which is applied to the last softmax layer. Specifically, denote $\mathbf{z}^b(X)$ as the logit output of the bias-only model on sample $(X,Y)$, abbreviated for $\mathbf{z}^b$, temperature scaling will correct the output as follows: $\tilde{\mathbf{p}}^b = \text{softmax}(\mathbf{z}^b/T)$, where $T$ is the temperature, which is learned with the cross-entropy loss.

% It can produce nearly confidence-calibrate bias-only model, reflect in a low Expected Calibration Error (confidence-ECE), However, having only single tunable parameter, it cannot learn to act differently on different class, which result in poor classwise-calibrate.

Dirichlet calibrator is derived from the Dirichlet distribution likelihood. The transformed probability is computed as $\tilde{\mathbf{p}}^b=\text{softmax}(\mathbf{W}\ln{\mathbf{p}^b}+\mathbf{b}')$, where $\mathbf{W}$ and $\mathbf{b}'$ stand for the linear transformation matrix and intercept term, which are optimized by the cross-entropy loss equipped with ODIR (Off-Diagonal and Intercept Regularisation) to prevent over-fitting~\citep{kull2019beyond}.
% as follows.
% \begin{align*}
%     % \setlength\abovedisplayskip{30pt}%shrink space
%     % \setlength\belowdisplayskip{30pt}
%     \begin{split}
%         \min \mathbb{E}_{X,Y\sim \mathbb{P}_{\mathcal{D}}} [L_c(Y, \tilde{\mathbf{p}}^b(X))]  + \lambda \left(\frac{1}{K(K-1)}\sum_{i \neq j}w_{ij}^2  \right) + \mu \left(\frac{1}{k} \sum_j b_j'^2 \right)
%     \end{split}
% \end{align*}
% where $w_{ij}$ and $b_j'$ are elements of $\mathbf{W}$ and $\mathbf{b}'$, respectively. $\lambda$ and $\mu$ are hyper-parameters to control the regularization.

Please note that temperature scaling does not change the predicted label because the maximum of the softmax function remains unchanged. In other words, it only changes the uncertainty estimation and maintains the model’s accuracy. Unlike temperature scaling, the Dirichlet calibrator can change the prediction accuracy. Empirically, we observed that the Dirichlet calibrator improves the accuracy of all bias-only models in our experiments (See the Appendix for details). In both methods, the calibration error is expected to be reduced by learning the parameters with the cross-entropy loss.

% Empirically, we observed that the Dirichlet calibrator improves the accuracy of all bias-only models in our experiments (See the Appendix for details). 

\subsection{Debiasing}
The final step is to train the main model $f_D$ with the calibrated bias-only model $\tilde{f}_B$. Specifically, $\tilde{f}_B$ is applied with the existing ensembling strategies to make the main model $f_D$ approximate the true principle $\mathbb{P}_\mathcal{D}(Y|X^S)$, by adjusting the learning target of the main model, as described in Section~\ref{sec:back}. 
The design of the main model is highly dependent on the concerned task, as indicated by previous works. For example, a BERT-based classifier is usually used in NLI~\citep{he2019unlearn}, and a BottomUp-TopDown VQA model is usually adopted in VQA~\citep{clark2019don}.

% and LSTM, Bert-base model, and XXX have been used as the main model in previous works . 

\section{Experiments}
\label{sec:exp}
% reversal rate: the improve of in-distribution on FEVER
% why mnli-hard is not good 
In this section, we conduct experiments on different real-world datasets to answer two questions: (1) whether our proposed MoCaD framework improves the debiasing performance of the EBD methods; (2) whether the experimental results are consistent with the theoretical findings.

%whether the in-distribution performance will be maintained with the improvement of the debiasing performance? (3) 

\subsection{Experimental Settings}
\label{subsec:set}
We describe our experimental settings, including datasets, models and some training details. More details are provided in the Appendix.

\paragraph{Datasets and bias-only models.} We conduct experiments on both fact verification and natural language inference, which are commonly used tasks in debiasing~\citep{clark2019don,utama2020mind,utama2020towards}. We follow these works to choose the datasets and design the bias-only models. 
% We conduct experiments on both fact verification and natural language inference, including four challenging datasets.

Fact verification requires models to validate a claim in the context of evidence. For this task, we use the training dataset provided by the FEVER challenge~\citep{thorne2018fever}. The processing and split of the dataset into training/development set are conducted following~\citet{schuster2019towards}\footnote{https://github.com/TalSchuster/FeverSymmetric}. It has been shown that FEVER has the claim-only bias, where claim sentences often contain words highly indicative of the target label~\citep{schuster2019towards}. So the bias-only model is trained to predict labels by only using claim sentences. Finally, Fever-Symmetric datasets~\citep{schuster2019towards} (both version 1 and 2) are used as the test sets for evaluation. % to evaluate the original EBD methods and our calibrated ones.

Natural language inference aims to infer the relationship between premise and hypothesis. Recent studies have shown that various biases exist in the widely used NLI datasets~\citep{poliak2018hypothesis,gururangan2018annotation,mccoy2019right}. In this paper, we conduct our experiments on MNLI~\citep{williams2018broad} and consider both known bias and unknown bias. For known bias, firstly, we consider the syntactic bias, e.g.~the lexical overlap between premise and hypothesis sentences is strongly correlated with the entailment label~\citep{mccoy2019right}. So the bias-only model is a classifier using hand-crafted features indicating how words are shared between the two sentences as the input, the same as that in~\citep{clark2019don}. Finally, HANS (Heuristic Analysis for NLI Systems)~\citep{mccoy2019right} is utilized as the challenging dataset for evaluation. Then we consider the hypothesis-only bias, which means that we can only use the hypothesis to predict the relation between premise and hypothesis. So the bias-only model is defined as a classifier trained to predict labels by only using hypothesis. In the experiment, we still use MNLI as the training set and employ two hard MNLI datasets~\citep{gururangan2018annotation,liu2020hyponli} for evaluation. The `hard' subsets are derived from the MNLI Mismatched dataset with two different strategies: (1) a neural classifier is trained on hypothesis sentences and the wrongly classified instances are treated as `hard' instances. (2) patterns in hypothesis sentences that are highly correlated to the specific labels are extracted as surface patterns, and samples which against those surface patterns’ indications are recognized as `hard' samples. Therefore, the two challenging dataset are referred to as Hard-CD (Classifier Detected) and Hard-SP (Surface Pattern), corresponding to their creation strategies. For unknown bias, following~\citet{utama2020towards}, we build a `shallow' model as the bias-only model, which has the same architecture as the main model and is trained on a subset of the MNLI training set. Then we use HANS as the challenging dataset for evaluation as \citet{utama2020towards}.

\paragraph{Baselines and configurations.}
We experiment with 8 implementations of MoCaD, i.e.~two different calibrators combined with four different ensembling strategies. The two calibrators are temperature scaling and Dirichlet calibrator, and the four ensembling strategies 
are those in Product-of-Experts (PoE), Learned-Mixin (LMin), DRiFt, and Inverse-Reweight (Inv-R). We compare the performances of these implementations with their corresponding two-stage EBD methods. We denote different implementations of MoCaD by the name of corresponding EBD methods with the calibrator name as the subscript. We use \texttt{TempS} and \texttt{Dirichlet} to denote the implemented methods with temperature scaling and Dirichlet as the calibrator, respectively. \par
In our experiments, we adopt the BERT-based classifier as the main model and follow the standard setup for sentence pair classification~\citep{devlin2019bert}. The cross-entropy trained model (denoted as CE) is also included as a baseline, to show the difference between the debiased and un-debiased model. To tackle the high performance variance on challenging datasets as observed by~\citet{clark2019don}, we run each experiment five times and report the mean scores and the standard deviations. For each task, we utilize the training configurations that have been proven to work well in previous studies and keep the same bias-only model for all methods. For Learned-Mixin, the entropy term weight is set to the value suggested by~\citet{utama2020mind}. For the Dirichlet calibrator, we set $\lambda=0.06$ for all experiments, based on the in-distribution performance on the development sets.

\subsection{Experimental Results}
Now we show our experimental results to answer the aforementioned two questions.
% \subsubsection{Debiasing Performance}
% \subsubsection{Main Result}

Table~\ref{table:fever-main-result} shows the experimental results on FEVER. We can see that for both calibrators, MoCaD outperforms the corresponding EBD methods, including Learned-Mixin, on both Fever-Symmetric v1 and v2 datasets. Comparing different calibrators, \texttt{Dirichlet} consistently performs better than \texttt{TempS}.
Please note that the label distribution of the development set is different from that of the training set on FEVER,
% varies from training to development set on FEVER, 
which explains why sometimes \texttt{Dirichlet} obtains better in-distribution performance than the cross-entropy loss.

\begin{wraptable}{r}{7.9cm}
\vspace{-10pt}
    % \setcaptionwidth{.96\textwidth}
    \centering
    \caption{Classification accuracy on FEVER.}
    \begin{tabular}{lccc}
    % \begin{tabular*}{\hsize}{@{}@{\extracolsep{\fill}}lccccc@{}}
    \toprule
    \textbf{Method} & \textbf{ID} & \textbf{Symm. v1} &  \textbf{Symm. v2} \\ 
    % \genspace{0.6}{0.6}\hline\genspace{0.6}{0.6}
    \midrule
    CE & 87.1 {\scriptsize $\pm$ 0.6} & 56.5 {\scriptsize $\pm$ 0.9} & 63.9 {\scriptsize $\pm$ 0.9} \\ \midrule 
    PoE & 84.0 {\scriptsize $\pm$ 1.0} & 62.0 {\scriptsize $\pm$ 1.3} & 65.9 {\scriptsize $\pm$ 0.6} \\ 
    \textbf{PoE$_{\mbox{\scriptsize TempS}}$} & 82.0 {\scriptsize $\pm$ 0.9} & 63.3 {\scriptsize $\pm$ 0.9} & 66.4 {\scriptsize $\pm$ 0.8} \\ 
    \textbf{PoE$_{\mbox{\scriptsize Dirichlet}}$} & 87.1 {\scriptsize $\pm$ 1.0} & \textbf{65.9} {\scriptsize $\pm$ 1.1} & \textbf{69.1} {\scriptsize $\pm$ 0.8} \\ \midrule 
    DRiFt & 84.2 {\scriptsize $\pm$ 1.2} & 62.3 {\scriptsize $\pm$ 1.5} & 65.9 {\scriptsize $\pm$ 0.7} \\ 
    \textbf{DRiFt$_{\mbox{\scriptsize TempS}}$} & 81.7 {\scriptsize $\pm$ 0.9} & 63.5 {\scriptsize $\pm$ 1.3} & 66.5 {\scriptsize $\pm$ 0.7} \\ 
    \textbf{DRiFt$_{\mbox{\scriptsize Dirichlet}}$} & 87.4 {\scriptsize $\pm$ 1.2} & \textbf{65.7} {\scriptsize $\pm$ 1.4} & \textbf{69.0} {\scriptsize $\pm$ 1.3} \\ \midrule 
    InvR & 84.3 {\scriptsize $\pm$ 0.8} & 60.8 {\scriptsize $\pm$ 1.2} & 65.2 {\scriptsize $\pm$ 1.0} \\ 
    \textbf{InvR$_{\mbox{\scriptsize TempS}}$} & 83.8 {\scriptsize $\pm$ 0.6} & 61.5 {\scriptsize $\pm$ 0.9} & 65.4 {\scriptsize $\pm$ 0.7} \\ 
    \textbf{InvR$_{\mbox{\scriptsize Dirichlet}}$} & 87.0 {\scriptsize $\pm$ 0.8} & \textbf{63.8} {\scriptsize $\pm$ 2.2} & \textbf{68.2} {\scriptsize $\pm$ 1.7} \\ \midrule 
    LMin & 84.7 {\scriptsize $\pm$ 1.8} & 59.8 {\scriptsize $\pm$ 2.7} & 65.3 {\scriptsize $\pm$ 1.1} \\ 
    \textbf{LMin$_{\mbox{\scriptsize TempS}}$} & 84.9 {\scriptsize $\pm$ 1.7} & 60.0 {\scriptsize $\pm$ 2.5} & 65.6 {\scriptsize $\pm$ 1.5} \\ 
    \textbf{LMin$_{\mbox{\scriptsize Dirichlet}}$} & 87.5 {\scriptsize $\pm$ 1.1} & \textbf{61.5} {\scriptsize $\pm$ 2.4} & \textbf{67.1} {\scriptsize $\pm$ 1.3} \\ \bottomrule
    \end{tabular}
    % \end{tabular*}
\label{table:fever-main-result}
% \vspace{-30pt}
\vspace{-10pt}
% \vspace{-40pt}
\end{wraptable}

Table~\ref{table:nli_main_result} shows the experimental results on MNLI with respect to known bias and unknown bias. The main results are similar to that on FEVER, i.e.~calibration brings benefit to the debiasing performance, and \texttt{Dirichlet} obtains better results than \texttt{TempS}, for all EBD methods except Learnd-Mixin on HANS. It indicates that for both known and unknown dataset bias, MoCaD outperforms corresponding EBD methods. Please note that, as a trainable gate function is added in Learned-Mixin, the optimal bias-only model of it is different from others and does not fit our theoretical assumptions. Specially, the performance gap between baselines and our methods is relatively small on Hard-CD. This may due to the fact that the construction of Hard-CD is dependent on a specific biased model.

\begin{table*}[!t]
    \centering
    \caption{Classification accuracy on MNLI.\label{table:nli_main_result}}
     \setlength\tabcolsep{5.2 pt}
    \begin{tabular}{lcc|ccc|cc}
    % \begin{tabular*}{\hsize}{@{}@{\extracolsep{\fill}}lccc|ccccc@{}}
    \toprule
    {\multirow{2}{*}{\textbf{Method}}} & \multicolumn{2}{c|}{\textbf{Syntactic Bias}} & \multicolumn{3}{c|}{\textbf{Hypothesis-only Bias}} & 
    \multicolumn{2}{c}{\textbf{Unknown Bias}} \\ 
    &\textbf{ID}&\textbf{HANS}&\textbf{ID}&\textbf{Hard$_{\mbox{\scriptsize CD}}$}&\textbf{Hard$_{\mbox{\scriptsize SP}}$}&\textbf{ID}&\textbf{HANS} \\ \midrule
CE & 84.2 {\scriptsize $\pm$ 0.2} & 61.2 {\scriptsize $\pm$ 3.2} & 84.2 {\scriptsize $\pm$ 0.2} & 76.8 {\scriptsize $\pm$ 0.4} & 72.6 {\scriptsize $\pm$ 2.0} & 84.2 {\scriptsize $\pm$ 0.2} & 61.2 {\scriptsize $\pm$ 3.2} \\ \midrule 
PoE & 82.8 {\scriptsize $\pm$ 0.4} & 68.1 {\scriptsize $\pm$ 3.4} & 83.2 {\scriptsize $\pm$ 0.2} & 79.4 {\scriptsize $\pm$ 0.4} & 76.8 {\scriptsize $\pm$ 2.4} & 80.7 {\scriptsize $\pm$ 0.2} & 69.0 {\scriptsize $\pm$ 2.4} \\ 
\textbf{PoE$_{\mbox{\scriptsize TempS}}$} & 83.9 {\scriptsize $\pm$ 0.3} & 69.1 {\scriptsize $\pm$ 2.8} & 82.9 {\scriptsize $\pm$ 0.3} & 79.6 {\scriptsize $\pm$ 0.4} & 77.4 {\scriptsize $\pm$ 2.4} & 82.1 {\scriptsize $\pm$ 0.2} & 69.9 {\scriptsize $\pm$ 1.6} \\ 
\textbf{PoE$_{\mbox{\scriptsize Dirichlet}}$} & 84.1 {\scriptsize $\pm$ 0.3} & \textbf{70.7} {\scriptsize $\pm$ 1.5} & 82.7 {\scriptsize $\pm$ 0.4} & 79.4 {\scriptsize $\pm$ 0.2} & \textbf{77.6} {\scriptsize $\pm$ 2.1} & 82.3 {\scriptsize $\pm$ 0.3} & \textbf{70.7} {\scriptsize $\pm$ 1.0} \\ \midrule 
DRiFt & 81.8 {\scriptsize $\pm$ 0.4} & 66.5 {\scriptsize $\pm$ 4.0} & 83.5 {\scriptsize $\pm$ 0.4} & 79.5 {\scriptsize $\pm$ 0.6} & 76.3 {\scriptsize $\pm$ 1.6} & 80.2 {\scriptsize $\pm$ 0.3} & 69.1 {\scriptsize $\pm$ 1.3} \\ 
\textbf{DRiFt$_{\mbox{\scriptsize TempS}}$} & 83.0 {\scriptsize $\pm$ 0.4} & 69.7 {\scriptsize $\pm$ 1.8} & 83.1 {\scriptsize $\pm$ 0.2} & 79.6 {\scriptsize $\pm$ 0.2} & 77.4 {\scriptsize $\pm$ 3.3} & 81.5 {\scriptsize $\pm$ 0.3} & \textbf{70.0} {\scriptsize $\pm$ 0.9} \\ 
\textbf{DRiFt$_{\mbox{\scriptsize Dirichlet}}$} & 83.6 {\scriptsize $\pm$ 0.3} & \textbf{69.8} {\scriptsize $\pm$ 1.9} & 82.8 {\scriptsize $\pm$ 0.3} & 79.6 {\scriptsize $\pm$ 0.2} & \textbf{79.0} {\scriptsize $\pm$ 1.6} & 81.9 {\scriptsize $\pm$ 0.6} & 69.4 {\scriptsize $\pm$ 1.1} \\ \midrule 
InvR & 82.5 {\scriptsize $\pm$ 0.1} & 68.4 {\scriptsize $\pm$ 1.2} & 83.1 {\scriptsize $\pm$ 0.2} & 78.4 {\scriptsize $\pm$ 0.5} & 77.1 {\scriptsize $\pm$ 2.0} & 78.7 {\scriptsize $\pm$ 4.8} & 64.7 {\scriptsize $\pm$ 2.6} \\ 
\textbf{InvR$_{\mbox{\scriptsize TempS}}$} & 83.6 {\scriptsize $\pm$ 0.2} & 69.4 {\scriptsize $\pm$ 1.6} & 82.8 {\scriptsize $\pm$ 0.2} & 78.6 {\scriptsize $\pm$ 0.2} & 77.9 {\scriptsize $\pm$ 1.7} & 81.4 {\scriptsize $\pm$ 0.5} & 65.8 {\scriptsize $\pm$ 0.9} \\ 
\textbf{InvR$_{\mbox{\scriptsize Dirichlet}}$} & 83.7 {\scriptsize $\pm$ 0.4} & \textbf{69.4} {\scriptsize $\pm$ 1.3} & 82.5 {\scriptsize $\pm$ 0.2} & 78.9 {\scriptsize $\pm$ 0.4} & \textbf{80.8} {\scriptsize $\pm$ 2.0} & 81.5 {\scriptsize $\pm$ 0.2} & \textbf{68.2} {\scriptsize $\pm$ 0.8} \\ \midrule 
LMin & 84.1 {\scriptsize $\pm$ 0.3} & \textbf{65.5} {\scriptsize $\pm$ 3.7} & 80.5 {\scriptsize $\pm$ 0.3} & 80.0 {\scriptsize $\pm$ 0.4} & 78.2 {\scriptsize $\pm$ 2.0} & 83.1 {\scriptsize $\pm$ 0.3} & \textbf{66.5} {\scriptsize $\pm$ 1.1} \\ 
\textbf{LMin$_{\mbox{\scriptsize TempS}}$} & 84.1 {\scriptsize $\pm$ 0.2} & 63.2 {\scriptsize $\pm$ 2.7} & 80.5 {\scriptsize $\pm$ 0.6} & 80.3 {\scriptsize $\pm$ 0.2} & 80.8 {\scriptsize $\pm$ 3.6} & 83.3 {\scriptsize $\pm$ 0.2} & 66.2 {\scriptsize $\pm$ 1.0} \\ 
\textbf{LMin$_{\mbox{\scriptsize Dirichlet}}$} & 84.3 {\scriptsize $\pm$ 0.3} & 62.7 {\scriptsize $\pm$ 2.6} & 80.1 {\scriptsize $\pm$ 0.5} & 79.8 {\scriptsize $\pm$ 0.4} & \textbf{83.2} {\scriptsize $\pm$ 2.2} & 82.7 {\scriptsize $\pm$ 0.2} & 66.4 {\scriptsize $\pm$ 1.2} \\ \bottomrule
    \end{tabular}
    \vspace{-8pt}
    % \end{tabular*}
\end{table*}

\subsubsection{Empirical Verification of Theorem~\ref{thm:interval}}

Now we analyze whether the improvement of debiasing performance agrees with our theoretical study in Theorem~\ref{thm:interval}. That is, calibrated models achieve better uncertainty estimation, leading to better debiasing performance results. 

To facilitate the study, we demonstrate the classwise-ECE of the calibrated bias-only models on different training datasets, as shown in Table~\ref{table:classwise-ece-of-calibrate}. In the table, Un-Cal, Dirichlet, and TempS denote the bias-only model without calibration, with temperature scaling and Dirichlet calibrator, respectively. From the results, we can see that calibrated bias-only models on different datasets achieve better uncertainty estimation, for both calibrators. Comparing the two calibrators, the Dirichlet calibrator performs better because of its higher expressive power. Further considering the debiasing improvement in Table~\ref{table:fever-main-result} and \ref{table:nli_main_result}, we can see that the empirical findings consist with our theory. 

Furthermore, we conduct a more detailed experiment on MNLI and FEVER, regarding syntactic bias and claim-only bias respectively. Specifically, we adopt the ensembling strategy in PoE, and calibrate bias-only models with the Dirichlet calibrator and save models at different checkpoints. Then we consider the debiasing performances of bias-only models with different uncertainty estimation qualities, measured by classwise-ECE. The results are plotted in Figure~\ref{fig:ood}. We can see that when the classwise-ECE grows, i.e.~the calibration error of the bias-only model grows, the accuracy on the test set decreases, i.e.~the debiasing performance drops. These results precisely prove Theorem~\ref{thm:interval}.
\begin{wraptable}{r}{7cm}
    \vspace{15pt}
    \begin{center}
    \caption{Classwise-ECE of the calibrated bias-only models on different training datasets.}
    \setlength\tabcolsep{3.1 pt}
    \begin{tabular}{lcccc}
    \toprule
     & \textbf{FEVER} & \textbf{HANS} & \textbf{MNLI} & \textbf{Unknown}  \\ 
    \midrule
    Un-Cal & 7.11 & 9.83 & 3.01 & 7.41 \\ \midrule
    TempS & 6.23 & 7.70 & 2.38 & 3.07 \\ \midrule
    Dirichlet & 1.73 & 4.47 & 0.87 & 1.45 \\  \bottomrule 
    \end{tabular}
    \label{table:classwise-ece-of-calibrate}
    \end{center}
    % \captionsetup[table]{skip=50pt}
     \vspace{-30pt}
\end{wraptable}

\subsubsection{Empirical Verification of Theorem~\ref{thm:output}}

\begin{figure}
    \centering
    \begin{minipage}[t]{0.49\textwidth}
    \centering
    \includegraphics[scale=0.28]{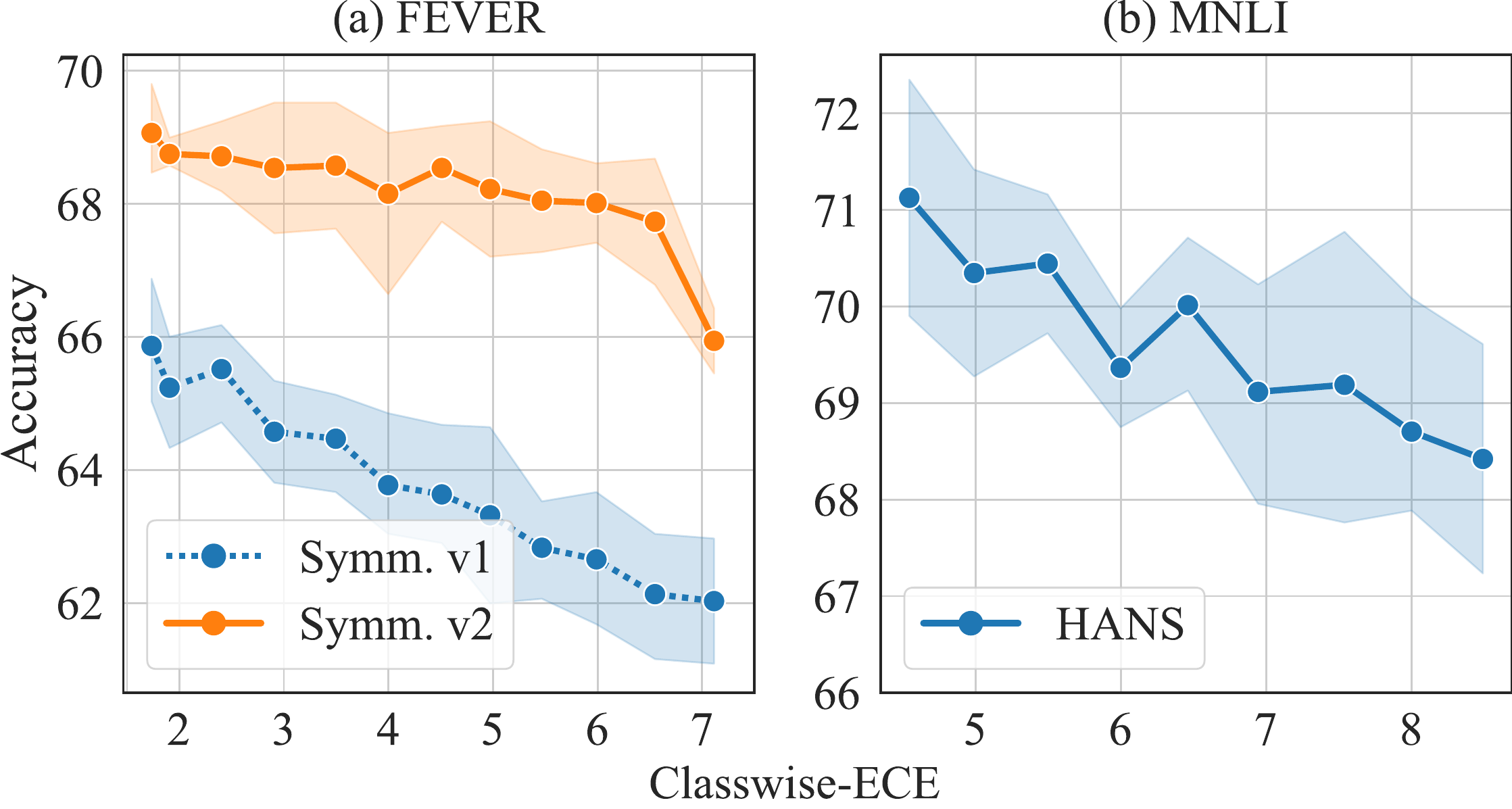}
    \setcaptionwidth{.9\textwidth}
    \caption{Debiasing performance of bias-only model vs the quality of predictive uncertainty measured by classwise-ECE (lower is better).}
     \label{fig:ood}
    \end{minipage}
    \begin{minipage}[t]{0.49\textwidth}
    \centering
    \includegraphics[scale=0.28]{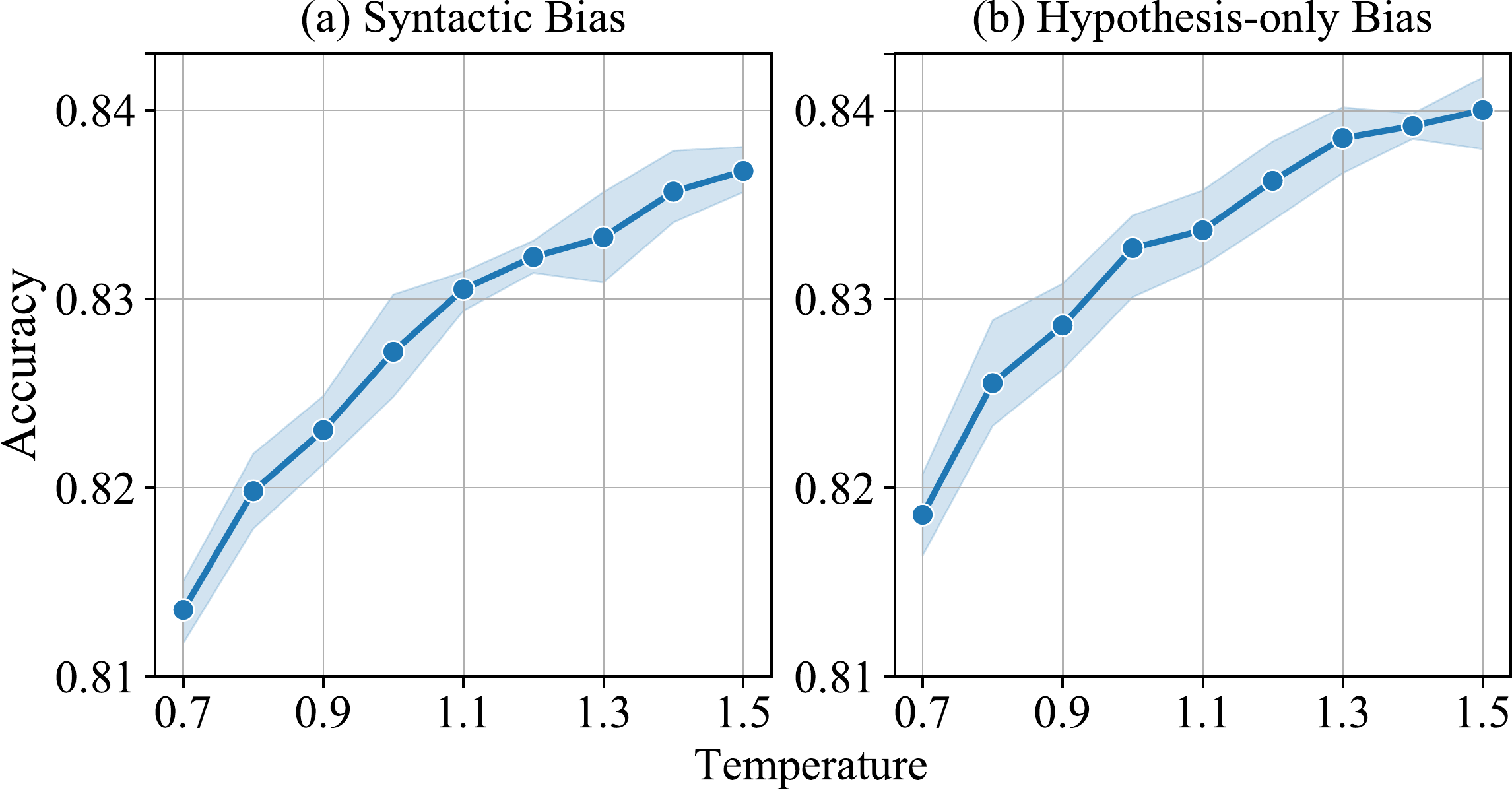}
    \setcaptionwidth{.9\textwidth}
    \caption{In-distribution performance (accuracy) of the main model vs temperature. }
    \label{fig:iid}
    \end{minipage}
    \vspace{-10pt}
\end{figure}

% \begin{figure}[t]
%   \begin{center}
%     \includegraphics[scale=0.33]{img/ood-fever-hans.pdf}
%     \caption{Debiasing performance of bias-only model vs the quality of predictive uncertainty measured by classwise-ECE (lower is better). \label{fig:ood}}
%   \end{center}
% \vspace{-10pt}
% \end{figure}

% \begin{figure}[!th]
%   \begin{center}
%     \includegraphics[scale=0.33]{img/iid-hans-hard.pdf}
%     \caption{In-distribution performance of the main model vs temperature. \label{fig:iid}}
%   \end{center}
% \vspace{-10pt}
% \end{figure}

Theorem~\ref{thm:output} reveals the relation between the confidence of the bias-only model and the in-distribution error of the main model. That is, if the confidence, i.e. the uncertainty estimation of the bias-only model on the predicted label is reduced, the in-distribution error of the main model will decrease. 
Since the label distribution changes on the development set of FEVER, we only consider the results on MNLI. From Table~\ref{table:nli_main_result}, the in-distribution performance increases in the scenario of syntactic and unknown bias and decreases in the scenario of hypothesis-only bias, for most implementations. That is because the syntactic and unknown bias-only model is over-confident, and the hypothesis-only bias-only model is under-confident, as shown in our Appendix. These results are accordant with our theory. \par
% for both calibrators and all ensembling strategies
We provide a detailed experiment to further explain the relationship revealed in Theorem~\ref{thm:output}. Specially, we adopt the ensembling strategy in PoE and take temperature scaling as the calibrator, because the temperature parameter controls the confidence of the calibrated model. The bigger the temperature, the less confident the obtained model. We manually set the temperature parameter from 0.7 to 1.5 with step 0.1, and record the in-distribution accuracy on the development set for the calibrated bias-only model with PoE. The results are plotted in Figure~\ref{fig:iid}. It shows that when the bias-only model is less confident, the in-distribution performance of the main model improves, which verifies Theorem~\ref{thm:output}. 
% where the x-axis is the temperature, and the y-axis is the in-distribution accuracy

% We perform two empirical analysis to verify the theory. As the label distribution varies from training to development set on FEVER, the analysis is conducted on MNLI for both syntatic bias and hypothesis-only bias. Firstly, we... To further verify it, 
% we conduct a second empirical analysis.

\section{Conclusions and Future Work}
This paper theoretically and empirically reveals an important problem, which is ignored in previous studies, that existing bias-only models in the EBD methods are poor-calibrated, leading to unsatisfactory debiasing performances. To tackle this problem, we propose a three-stage EBD framework (MoCaD), including bias modeling, model calibrating, and debiasing. Extensive experiments on natural language inference and fact verification tasks show that MoCaD outperforms corresponding EBD methods, regarding known and unknown dataset bias. Furthermore, our detailed empirical analyses verify the correctness of our theorems. We believe that our study will draw people's attention to the bias-only model, which has the potential to become an interesting research direction in the debiasing study. A limitation of this paper is that our empirical studies focus on NLU tasks. Further experimental results on image classification show inconsistent improvements (See the appendix). A possible reason is that image classes (e.g., birds or elephants) are less disputed than language concepts (e.g., entailment or neutral). Thus the invariant mechanism for image classification has a higher certainty, reducing the impact of calibration error on debiasing according to our theoretical analysis. In the future, we plan to extend our investigations to end-to-end EBD methods and more tasks besides NLU.
% A possible reason is that image classes (e.g., birds or elephants) are less disputed than language concepts (e.g., entailment or neutral).

\section*{Acknowledgments and Disclosure of Funding}
This work is supported by the National Key R\&D Program of China under Grants No.  2020AAA0105200, the National Natural Science Foundation of China (NSFC) under Grants No.  61773362, and 61906180.

% , on four challenging datasets

% \clearpage

\bibliographystyle{plainnat}
\bibliography{neurips}

% %%%%%%%%%%%%%%%%%%%%%%%%%%%%%%%%%%%%%%%%%%%%%%%%%%%%%%%%%%%%
\clearpage

\appendix

\section{Experiment Settings}
\subsection{Experimental Settings on FEVER}

% \paragraph{Datset} The claim-evidence pairs in training dataset provided by the FEVER challenge~\citep{thorne2018fever} are labeled as either support, refutes, and not enough information. We processed and divide FEVER datasets into train/development set as in \citet{schuster2019towards}. The FEVER-Symmetric dataset is a dataset to demonstrate the reliance on claim-only bias of FEVER models. The dataset is manually constructed such that relying on cues of the claim can lead to incorrect predictions.

\paragraph{Bias-only Model} The bias-only model is a nonlinear classifier trained on top of the vector representation of the claim sentence. We obtain this vector representation by max-pooling word embeddings into a single vector as in \citet{utama2020mind}.

\paragraph{Training Details} We follow \citet{schuster2019towards} to fine-tune the \texttt{bert-base-uncased} model using the following configuration:learning rate is set to $2\times10^{-5}$ and training for 3 epochs. Early stopping on validation accuracy is adopted. We use 5-fold internal cross-validation to train the Dirichlet calibrator and ensemble these calibrators by averaging their predictions. For the Dirichlet calibrator, we drop the bias term, and consider $\lambda \in \{0.03,0.06,0.003,0.006\}$ and set $\lambda=0.06$ in all experiments, according to the in-distribution performance on the development sets. We use 5-fold internal cross-validation to train the Dirichlet calibrator and ensemble these calibrators by averaging their predictions.

\subsection{Experimental Settings on MNLI}
% \paragraph{Dataset} 
% The MNLI dataset \citep{williams2018broad} consists of 392K pairs of premise and hypothesis sentences annotated with their textual entailment information (i.e., entailment, neutral, and contradiction). HANS consists of examples in which such correlation does not exist, i.e., hypotheses are not entailed by their wordoverlapping premises. 

\paragraph{Bias-only Model} For syntactic bias, we train a nonlinear classifier on top of the hand-crafted features. Following \citet{clark2019don}, the hand-crafted features include (1) whether all words in the hypothesis exist in the premise; (2) whether the hypothesis is a continuous subsequence of the premise; (3) the fraction of premise words that shared with hypotheses; (4) the mean, min, max of cosine similarities between word vectors in the premise and the hypothesis. We consider the same weight for neutral and contradiction class during training by mapping these labels into non-entailment and divide the outputs of non-entailment during debiasing training. For hypothesis-only bias, we train a nonlinear classifier on top of an LSTM-based sentence encoder, which only uses hypothesis sentence as input to predict the labels, as in \citet{utama2020mind}. For unknown bias, we build a `shallow' model as the bias-only model. It is a \texttt{bert-base-uncased} model fine-tuned on a subset of MNLI training set for 3 epochs using the learning rate of $5\times10^{-5}$. The subset contained 2K examples randomly sampled from MNLI training set, as in \citet{utama2020towards}. 

\paragraph{Training Details} For both hypothesis-only bias and syntactic bias, we fine-tune the \texttt{bert-base-uncased} model for all settings using the default configuration: learning rate is set to $5\times10^{-5}$ and training for 3 epochs, as in \citet{utama2020mind}. The exception is for DRiFt on syntactic bias since we found it convergences slow on the in-distribution development set. We train it for 6 epochs for all settings. Early stopping on validation accuracy is adopted. For unknown bias, we following \citet{utama2020towards} to fine-tune \texttt{bert-base-uncased} model for all settings using the following configuration: learning rate is set to $5\times10^{-5}$ and training for 5 epochs. We observed that MoCaD framework converges faster on the challenging dataset than the original EBD methods. Since the assumption is not having access to any out-of-domain test data, and there is no available development set for HANS, we follow \citep{belinkov132019adversarial,mahabadi2020end} to perform the model section on the test set. Here, we simply pick the model trained at the second-to-last epoch for MoCaD on unknown bias. For the Dirichlet calibrator, we use the same configuration as in FEVER.

\section{Proof for Decomposition}
% First, we briefly introduce the notations in~\citep{zhang2019selection} and their relations with the notations in this paper. \par
\begin{proof}
In \citet{zhang2019selection}, it is assumed that there exists a \emph{leakage-neutral} distribution $\mathscr{D}$ with domain $\mathcal{X} \times \mathcal{Y} \times \mathcal{L} \times \mathcal{S}$, where $\mathcal{X}$ is the input feature space, $\mathcal{L}$ is the sampling strategy feature space and $\mathcal{S}$ is the binary sampling intention space. The observed distribution is denoted as $\hat{\mathscr{D}}$, which satisfies $\mathbb{P}_{\hat{\mathscr{D}}}(x,y,l) = \mathbb{P}_{\mathscr{D}}(x,y,l|S=Y)$. In the following, we omit the subscripts for $\mathscr{D}$.
The following assumptions are adopted in \citep{zhang2019selection}:
\begin{align}
& \mathbb{P}(Y|L) = \mathbb{P}(Y), \\
& \mathbb{P}(S|X, Y, L) = \mathbb{P}(S|L)
\end{align}
In this framework, $\mathbb{P}(Y|X)$ is supposed to be the true principle to learn, corresponding to $\mathbb{P}_\mathcal{D}(Y|X^S)$ in our notation. Now we prove the following decomposition
\begin{equation}
\label{eq:selection}
\mathbb{P}_{\hat{\mathscr{D}}}(Y|X) \propto \mathbb{P}_{\hat{\mathscr{D}}}(Y|L)\mathbb{P}(Y|X)\frac{1}{\mathbb{P}_{\hat{\mathscr{D}}}(Y)}
\end{equation}
Correspondingly, by our notations we have 
\[\mathbb{P}_{\hat{\mathscr{D}}} = \mathbb{P}_\mathcal{D}, L = X^B.\]
As a result, equation~\ref{eq:selection} is equivalent to the decomposition (1) in the main paper.
To prove this equation, firstly,
\begin{align*}
    & \mathbb{P}_{\hat{\mathscr{D}}}(Y=y|X) = \mathbb{P}(Y=y|X, S=Y) \\
    & = \frac{\mathbb{P}(Y=y, S=y, X)}{\mathbb{P}(X, S=Y)} \\
    & = \frac{\mathbb{P}(S=y|Y=y, L, X^S) \mathbb{P}(Y=y, X)}{\mathbb{P}(X, S=Y)} \\
    & = \mathbb{P}(S=y|L) \mathbb{P}(Y=y|X) \frac{\mathbb{P}(X)}{\mathbb{P}(X, S=Y)} \\
    & \propto \mathbb{P}(S=y|L) \mathbb{P}(Y=y|X)
\end{align*}
Secondly,
\begin{align*}
    & \mathbb{P}_{\hat{\mathscr{D}}}(Y=y|L) = \mathbb{P}(Y=y|L, S=Y) \\
    & = \frac{\mathbb{P}(Y=y, S=y, L)}{\mathbb{P}(L, S=Y)} \\
    & = \frac{\mathbb{P}(S=y|Y=y, L)\mathbb{P}(Y=y, L)}{\mathbb{P}(L, S=Y)} \\
    & = \mathbb{P}(S=y|L)\mathbb{P}(Y=y)\frac{\mathbb{P}(L)}{\mathbb{P}(L, S=Y)}
\end{align*}
By the above equations we have
\begin{align*}
    \mathbb{P}_{\hat{\mathscr{D}}}(Y=y|X) & \propto \frac{\mathbb{P}_{\hat{\mathscr{D}}}(Y=y|L)}{\mathbb{P}(Y=y)}\mathbb{P}(Y|X) \frac{\mathbb{P}(L, S=Y)}{\mathbb{P}(L)} \\
    & \propto \mathbb{P}_{\hat{\mathscr{D}}}(Y=y|L)\mathbb{P}(Y|X)\frac{1}{\mathbb{P}(Y=y)}
\end{align*}
As $\mathbb{P}(Y)$ is a prior parameter chosen to balance the posterior distribution, it can be proved that this condition is satisfied when it equals $\mathbb{P}_\mathcal{D}(Y)$, as follows:
\begin{align*}
    \mathbb{P}_w(Y) & \propto \sum_l \frac{\mathbb{P}(Y)}{\mathbb{P}_{\hat{\mathscr{D}}}(Y|L=l)} \mathbb{P}_{\hat{\mathscr{D}}}(Y|L=l) \mathbb{P}_{\hat{\mathscr{D}}}(L=l) = \mathbb{P}(Y)
\end{align*}
where $\mathbb{P}_w(Y)$ denotes the distribution of $Y$ after the reweighting. As a result $\mathbb{P}_w(Y=y) \propto \mathbb{P}_\mathcal{D}(Y=y)$ is satisfied when $\mathbb{P}(Y=y)=\mathbb{P}_\mathcal{D}(Y=y)$. That ends our proof.
\end{proof}

\section{Useful Notations}
We introduce some notations used in the proof of theorems. \par
\noindent \textbf{Notations} (Level sets).
\begin{align*}
\mathcal{S}_B(b) &:= \{ x \in \mathcal{X} | \mathbb{P}_\mathcal{D}(Y=0|X^B=x^b) = b \} \\
\mathcal{S}_{f_B}(l) &:= \{ x \in \mathcal{X}| p^b_0(X=x) = l \}. \\
\mathcal{S}_E(a) &:= \{ x \in \mathcal{X} | \mathbb{P}_\mathcal{D}(Y=0|X=x) = a \} \\
\mathcal{S}_R(s) &:= \{ x \in \mathcal{X} | \mathbb{P}(Y=0|X^S=x^s) = s \}
\end{align*}

\noindent \textbf{Notation} ($\tilde{s}$).
\begin{align*}
    \tilde{s}_{a,b} = \frac{a(1-b)}{a(1-b) + b(1-a)}
\end{align*}

%We define the Reversal Rate ($\mathcal{R}$) a given input set $\mathcal{S}$.
%\begin{definition}[Reversal Rate]
%\begin{align*}
%\mathcal{R}(\mathcal{S}) &:=  \mathbb{P}_\mathcal{D}(\{X \in \mathcal{S}| \hat{Y}(X) \neq Y(X)\}) \\
%\mathcal{A}(\mathcal{S}) &:= \mathbb{P}_\mathcal{D}( \{ X \in \mathcal{S}| \hat{Y}(X) =  Y(X) \} )
%\end{align*}

\noindent \textbf{Notations} ($Y^S, \tilde{Y}, \hat{Y}$).
\begin{align}
Y^S(x) &:= \mathrm{argmax}_{i \in \mathcal{Y}}\: \mathbb{P}_\mathcal{D}( Y=i|X^S=x^s), \\
\tilde{Y}(x) &:= \mathrm{argmax}_{i \in \mathcal{Y}}\: \mathbb{P}_{\mathcal{D},f_M^*}(Y=i|X=x). \\
\hat{Y}(x) &:= \mathrm{argmax}_{i \in \mathcal{Y}}\: \mathbb{P}_\mathcal{D}(Y=i|X=x)
\end{align}

\noindent \textbf{Notation} ($\mathcal{P}^i(\cdot)$).
Denote $\mathcal{P}^i(\mathcal{S}) := \mathbb{P}_\mathcal{D}(Y^S=i | \mathcal{S})$.

% Trade-off between false reversals/agreements?
\begin{definition}[False Reversal Rate]
For an input $x$, we say $f_B(x)$ induces a false reversal if $\tilde{Y}(x) \neq \hat{Y}(x) = Y^S(x)$. The false reversal rate of a set $\mathcal{S}$ is defined as $\frac{\mathbb{P}_\mathcal{D}(\mathcal{S}_{fr})}{\mathbb{P}_\mathcal{D}(\mathcal{S})}$, where $x \in \mathcal{S}_{fr}$ if it occurs false reversal and $x \in \mathcal{S}$.
\end{definition}
Similarly we define the False Agreement Rate:
\begin{definition}[False Agreement Rate]
For an input $x$, we say $f_B(x)$ induces a false agreement, if $\tilde{Y}(x) = \hat{Y}(x) \neq Y^S(x)$. The false agreement rate of a set $\mathcal{S}$ is defined as $\frac{\mathbb{P}_\mathcal{D}(\mathcal{S}_{fa})}{\mathbb{P}_\mathcal{D}(\mathcal{S})}$, where $x \in \mathcal{S}_{fa}$ if it occurs false agreement and $x \in \mathcal{S}$.
\end{definition}

% practice statement
%In practice, we approximate $\mathbb{P}_\mathcal{D}(Y|X)$ with an ensemble model $f_E$ (in in-direct methods) or the empirical estimation (in direct methods). In these cases we denote the 
%\begin{align}
%\widehat{\mathbb{P}}_\mathcal{D}(Y|X) \propto \mathbb{P}_\mathcal{D}(Y|B(X))\mathbb{P}_\mathcal{D}(Y|\hat{S})
%\end{align}
%where $\hat{X}^{-B}$ is a feature variable independent with $B(X)$ on $\mathbb{P}_\mathcal{D}$. \par
% \mathcal{S}_{\hat{E}} (\mathbf{a}) &:= \{ X | \widehat{\mathbb{P}}_D(Y|X) = \mathbf{a} \}

\section{Proof of Theorem~1}
First we prove the following lemma.
\begin{lemma}
\label{lemma}
Denote $\mathcal{R}_b(a) := \mathcal{P}^1(\mathcal{S}_E(a) \cap \mathcal{S}_B(b))$, and $p_B(a|b) := \mathbb{P}_\mathcal{D}(\mathcal{S}_E(a) | \mathcal{S}_B(b))$. We have
\begin{align}
& \mathcal{R}_b(a) = \mathcal{P}^1( \mathcal{S}_R(\tilde{s}_{a,b})) = I(\tilde{s}_{a,b} < 0.5), \\
& p_B(a|b) = C_{a,b} \mathbb{P}_{\mathcal{D}}(\mathcal{S}_R(\tilde{s}_{a,b})),
\end{align}
where $C_{a,b} = \frac{1}{2}(\frac{a}{b} + \frac{1-a}{1-b})^{-1}$.
\end{lemma}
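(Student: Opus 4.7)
The plan is to reduce the intersection $\mathcal{S}_E(a) \cap \mathcal{S}_B(b)$ to an intersection involving a level set of the signal-based principle, and then use the conditional independence $X^S \perp\!\!\!\perp X^B \mid Y$ together with the balanced-label assumption to convert conditional probabilities into expressions in $\mathbb{P}_\mathcal{D}(\mathcal{S}_R(\tilde{s}_{a,b}))$.

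The key first step is the set identity $\mathcal{S}_E(a) \cap \mathcal{S}_B(b) = \mathcal{S}_R(\tilde{s}_{a,b}) \cap \mathcal{S}_B(b)$. Applying the decomposition (1) with $\mathbb{P}_\mathcal{D}(Y=0)=1/2$ to a generic $x$ with $\mathbb{P}_\mathcal{D}(Y=0|X^B=x^b)=b$ and $\mathbb{P}_\mathcal{D}(Y=0|X^S=x^s)=s$ yields $\mathbb{P}_\mathcal{D}(Y=0|X=x) = bs/(bs+(1-b)(1-s))$. Setting this equal to $a$ and solving for $s$ gives exactly $s = \tilde{s}_{a,b}$; conversely, plugging $s = \tilde{s}_{a,b}$ back in recovers $a$. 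This establishes the identity.

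From the set identity, the first claim is immediate: $Y^S$ is a function of $X^S$ alone, and every $x \in \mathcal{S}_R(\tilde{s}_{a,b})$ satisfies $\mathbb{P}_\mathcal{D}(Y=0|X^S=x^s) = \tilde{s}_{a,b}$, so $Y^S \equiv 1$ on the set if $\tilde{s}_{a,b} < 1/2$ and $Y^S \equiv 0$ otherwise. Hence $\mathcal{R}_b(a) = \mathcal{P}^1(\mathcal{S}_R(\tilde{s}_{a,b})) = I(\tilde{s}_{a,b} < 1/2)$.

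For the second claim, I would write
\begin{equation*}
p_B(a|b) = \mathbb{P}_\mathcal{D}(\mathcal{S}_R(\tilde{s}_{a,b}) \mid \mathcal{S}_B(b)) = \sum_{y \in \{0,1\}} \mathbb{P}_\mathcal{D}(\mathcal{S}_R(\tilde{s}_{a,b}) \mid Y=y, \mathcal{S}_B(b))\,\mathbb{P}_\mathcal{D}(Y=y \mid \mathcal{S}_B(b)).
\end{equation*}
Since $\mathcal{S}_R$ is determined by $X^S$ and $\mathcal{S}_B$ by $X^B$, conditional independence drops $\mathcal{S}_B(b)$ from the first factor, while the second factor equals $b$ for $y=0$ and $1-b$ for $y=1$ by definition of $\mathcal{S}_B(b)$. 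I then apply Bayes with balanced labels to get $\mathbb{P}_\mathcal{D}(\mathcal{S}_R(\tilde{s}_{a,b}) \mid Y=y) = 2\,\mathbb{P}_\mathcal{D}(Y=y \mid \mathcal{S}_R(\tilde{s}_{a,b}))\,\mathbb{P}_\mathcal{D}(\mathcal{S}_R(\tilde{s}_{a,b}))$, and use $\mathbb{P}_\mathcal{D}(Y=0 \mid \mathcal{S}_R(\tilde{s}_{a,b})) = \tilde{s}_{a,b}$. Factoring $\mathbb{P}_\mathcal{D}(\mathcal{S}_R(\tilde{s}_{a,b}))$ out leaves the prefactor $2[b\tilde{s}_{a,b} + (1-b)(1-\tilde{s}_{a,b})]$, which I collapse by substituting the closed form of $\tilde{s}_{a,b}$ to obtain a function of $(a,b)$ that matches $C_{a,b}$. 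The main obstacle is this last algebraic simplification: plugging $\tilde{s}_{a,b} = a(1-b)/(a(1-b)+b(1-a))$ into $b\tilde{s}_{a,b}+(1-b)(1-\tilde{s}_{a,b})$ gives $b(1-b)/(a(1-b)+b(1-a))$, which one must then rearrange into $(a/b + (1-a)/(1-b))^{-1}$; care is also needed to treat the level sets as having positive mass (or reinterpret probabilities as densities) so that the conditional-probability manipulations are well-defined.
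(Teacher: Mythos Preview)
Your approach is correct and essentially identical to the paper's: both establish the set identity $\mathcal{S}_E(a)\cap\mathcal{S}_B(b)=\mathcal{S}_R(\tilde{s}_{a,b})\cap\mathcal{S}_B(b)$, then condition on $Y$, drop $\mathcal{S}_B(b)$ via $X^S\perp\!\!\!\perp X^B\mid Y$, and apply Bayes with the balanced-label assumption. One caveat: your prefactor $2\bigl[b\tilde{s}_{a,b}+(1-b)(1-\tilde{s}_{a,b})\bigr]=2\bigl(\tfrac{a}{b}+\tfrac{1-a}{1-b}\bigr)^{-1}$ differs from the stated $C_{a,b}=\tfrac{1}{2}\bigl(\tfrac{a}{b}+\tfrac{1-a}{1-b}\bigr)^{-1}$ by a factor of $4$---this is a slip in the paper's constant (its own derivation carries the same $\tfrac{1}{2}$ vs.\ $2$ error after Bayes), and it is immaterial for how the lemma is used downstream.
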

\begin{proof}
For the first equation, it is obvious that $\mathcal{S}_E(a) \cap \mathcal{S}_B(b) = \mathcal{S}_R(\tilde{s}_{a,b}) \cap \mathcal{S}_B(b)$. Then we have
\[\mathcal{P}^1( \mathcal{S}_E(a) \cap \mathcal{S}_B(b)) = \mathcal{P}^1( \mathcal{S}_R(\tilde{s}_{a,b}) \cap \mathcal{S}_B(b)).\]
By the definition of $\mathcal{P}^1$ we have 
\[\mathcal{P}^1( \mathcal{S}_R(\tilde{s}_{a,b}) \cap \mathcal{S}_B(b)) = \mathcal{P}^1( \mathcal{S}_R(\tilde{s}_{a,b})) = I(\tilde{s}_{a,b} < 0.5)\]
The first equation follows.\par
By $X^S \perp\!\!\!\perp X^B | Y$ on $\mathbb{P}_\mathcal{D}$,
% \[\mathbb{P}_\mathcal{D}(X^S | X^B, Y=0) = \mathbb{P}_\mathcal{D}(X^S | Y = 0) \]
As $\mathbb{P}(Y=0|X^S)$ is a function of $X^S$, $\mathbb{P}_{\mathcal{D}}(Y=0|X^B)$ is a function of $X^B$, we have
\begin{equation}
\label{eq:indep-claim}
    \mathbb{P}_{\mathcal{D}}(Y=0|X^S) \bot \mathbb{P}_{\mathcal{D}}(Y=0|X^B) | Y 
\end{equation}
Without loss of generality, we can assume that $\mathbb{P}_\mathcal{D}(Y=0|X^S)$ takes value in a discrete set $\mathcal{V}$. 
By the decomposition that 
\[\mathbb{P}_\mathcal{D}(Y|X) \propto \mathbb{P}_\mathcal{D}(Y|X^B)\mathbb{P}_\mathcal{D}(Y|X^S)\]
We have 
\begin{align*}
    \mathbb{P}_\mathcal{D} (\mathcal{S}_E(a) | \mathcal{S}_B(b)) & = \sum_{i=0,1} \mathbb{P}_\mathcal{D}(\mathcal{S}_R(\tilde{s}_{a,b}) | \mathcal{S}_B(b),Y=i)\mathbb{P}_\mathcal{D}(Y=i|\mathcal{S}_B(b))  \\
    & = \sum_{i=0,1} \mathbb{P}_\mathcal{D}(\mathcal{S}_R(\tilde{s}_{a,b})|Y=i)\mathbb{P}_\mathcal{D}(Y=i|\mathcal{S}_B(b)) \\
    & = \sum_{i=0,1} \frac{1}{2} \mathbb{P}_\mathcal{D}(Y=i|\mathcal{S}_R(\tilde{s}_{a,b}))\mathbb{P}_\mathcal{D}(\mathcal{S}_R(\tilde{s}_{a,b}))  \mathbb{P}_\mathcal{D}(Y=i|\mathcal{S}_B(b)) \\
    & = \frac{1}{2}(\tilde{s}_{a,b} \cdot b + (1-\tilde{s}_{a,b})(1-b)) \mathbb{P}_\mathcal{D}(\mathcal{S}_R(\tilde{s}_{a,b})) \\
    & = \frac{1}{2}(\frac{a}{b} + \frac{1-a}{1-b})^{-1} \mathbb{P}_{\mathcal{D}}(\mathcal{S}_R(\tilde{s}_{a,b})).
\end{align*}
The second equation follows.
% By $\mathbb{P}_\mathcal{D}(Y|X^S) = \mathbb{P}(Y|X^S)$, and $\mathbb{P}_\mathcal{D}(Y) = \mathbb{P}(Y)$, we have $\mathbb{P}_\mathcal{D} (X^S| Y) = \mathbb{P} (X^S| Y)$.
\end{proof}

Now we start the proof of the theorem.
% $\mathbb{P}(Y=0|X^S)$ takes value in a set $\mathcal{V} \in [0,1]$. 
\begin{proof}
Without the loss of generality, consider the case when $l_0 > 0.5$. The proof for $l_0 < 0.5$ has a symmetric form.
Denote $\mathcal{R}(a) := \mathcal{P}^1(\mathcal{S}_E(a) \cap \mathcal{S}_{f_B}(l))$, and $p_f(a|l) := \mathbb{P}_\mathcal{D}(\mathcal{S}_E(a) | \mathcal{S}_{f_B}(l))$. The False Reversal Rate and False Agreement Rate on $\mathcal{S}_{f_B}(l)$ is 
\begin{align}
\mathcal{FR}(l) &= \sum_{a} \mathcal{R}(a)p_f(a|l)  I(a > l)I(a<0.5) + (1-\mathcal{R}(a)) p_f(a|l)  I(a < l)I(a>0.5) \\
\mathcal{FA}(l) &= \sum_{a}(1-\mathcal{R}(a)) p_f(a|l)  I(a < l)I(a<0.5) + \mathcal{R}(a) p_f(a|l)  I(a > l)I(a>0.5)
\end{align}
The total debiasing error on $\mathcal{S}_{f_B}(l)$ is the summation of False Reversal Rate and False Agreement Rate, which denoted as $E(l)$. The difference of total debiasing error at $l = a$ is
\begin{equation}
    \Delta E(a) = (1 - 2\mathcal{R}(a))p_f(a|l)
\end{equation}
$\Delta E(a) < 0$ when $\mathcal{R}(a) \in [0, 0.5), p_f(a|l) > 0$, $\Delta E(a) > 0$ when $\mathcal{R}(a) \in (0.5, 1], p_f(a|l) < 0$. By that, when $p_f(a|l) > 0, \forall a \in (0,1)$, the total debiasing error is minimized at $a$ s.t. $\mathcal{R}(a) = 0.5$.\par

Denote $p_f^B(b|l) := \mathbb{P}_\mathcal{D}(\mathcal{S}_B(b) | \mathcal{S}_{f_B}(l))$. We have
\begin{equation}
    \mathcal{R}(a) = \sum_b \mathcal{R}_b(a) p_B(a|b) p_f^B(b|l) \frac{1}{p_f(a|l)}
\end{equation}
We suppose the support of $\mathbb{P}_\mathcal{D}(Y=0|X^B)$ condition on $\mathcal{S}_{f_B}(l)$ is on $(l_0-\epsilon, l_0+\epsilon)$, i.e. $p_f^B(b|l)$ is non-zero only if $b \in (l_0-\epsilon, l_0+\epsilon)$.\par
By Lemma~\ref{lemma}, we have
\begin{equation}
    \mathcal{R}_b(a) = I(\tilde{s}_{a,b} < 0.5)
\end{equation}
% As the value of $\tilde{s}_{a,b}$ increases as $a$ increases, we have $\mathcal{R}_b(a)$ decreases as $a$ increases, and $\mathcal{R}_b(a) = 0.5$ when $a=b$. 
When $a \in (l_0+\epsilon, 1)$, $\tilde{s}_{a,b}> 0.5, \forall b$. Thus $\mathcal{R}_b(a) = 1, \forall b$. We have
\begin{equation}
    \mathcal{R}(a) = \sum_b p_B(a|b) p_f^B(b|l) \frac{1}{p_f(a|l)} = 1
\end{equation}
Similarly it can be derived that $\mathcal{R}(a) = 0$ when $a \in (0, l_0-\epsilon)$. As a result, the debiasing error is non-decreasing as $l$ decreases on the interval $(0, l_0-\epsilon)$  or increases on the interval $(l_0+\epsilon, 1)$, i.e. The debiasing error increases as $|l - \mathbb{P}_\mathcal{D}(Y=0|\mathcal{S}_{f_B}(l))|$ increases. Denote the absolute difference between $\mathbb{P}_\mathcal{D}(Y=0|\mathcal{S}_{f_B}(l))$ and $l_{opt}$ which minimizes the debiasing error $E$ as $\delta(l_0, \epsilon, \alpha)$. As
\begin{equation}
    \mathbb{P}_\mathcal{D}(Y=0|\mathcal{S}_{f_B}(l)) = \sum_b b p_f^B(b|l) \in (l_0 - \epsilon, l_0 + \epsilon),
\end{equation}
$l_{opt}\in (l_0 - \epsilon, l_0 + \epsilon)$, we have $\delta(l_0, \epsilon, \alpha) < 2\epsilon$. \par

Now we consider the case when $\alpha := \min_{X^S} \max_{i \in \{0,1\}} \mathbb{P}_\mathcal{D}(Y=i|X^S) > 0$, i.e. $\mathbb{P}_\mathcal{D}(\mathcal{S}_R(s)) = 0$ when $s \in (1-\alpha, \alpha)$. When $\tilde{s}_{a,b} \in (1-\alpha, \alpha)$, by Lemma 1 $p_B(a|b)=0$, and we have
\begin{equation}
    a \in (\frac{b}{\frac{1-b}{1-\alpha} + 2b-1}, \frac{b}{\frac{1-b}{\alpha} + 2b-1}) =: (L_{\alpha, b}, U_{\alpha, b})
\end{equation}
Both $L_{\alpha, b}$ and $U_{\alpha, b}$ increase as $b$ increase. As a result, for $\forall a \in (L_{\alpha, l_0+\epsilon}, U_{\alpha, l_0-\epsilon})$, $p_f(a|l)= \sum_b p_B(a|b)p_f^B(b|l)= 0$.
When $l_0 - \epsilon = L_{\alpha, l_0 + \epsilon}$, we have
\begin{equation}
    \alpha = \frac{1}{2} + \frac{\epsilon}{2l_0(1-l_0)+2\epsilon^2} =: C_\alpha
\end{equation}
When $l_0 + \epsilon = U_{\alpha, l_0-\epsilon}$, we also have $\alpha = C_\alpha$. As $L_{\alpha, l_0+\epsilon}$ decreases and $U_{\alpha, l_0-\epsilon}$ increases with $\alpha$, when $\alpha < C_\alpha$, we have the same conclusion: $\Delta E(a) \ge 0$ when $a \in (l_0+\epsilon, 1)$ and $\Delta E(a) \le 0$ when $a \in (0, l_0-\epsilon)$, and $\delta(l_0, \epsilon, \alpha) < 2\epsilon$. That gives the first conclusion in Theorem 1.

For the case when $\alpha > C_\alpha$, $L_{\alpha, l_0+\epsilon} < l_0 - \epsilon$ and $U_{\alpha, l_0-\epsilon} > l_0 + \epsilon$. We consider the quantity $l_0 - \epsilon - L_{\alpha, l_0+\epsilon}$ and $U_{\alpha, l_0-\epsilon} - l_0 + \epsilon$. Denote $D(\alpha, l_0, \epsilon) = 2l_0  - (L_{\alpha, l_0+\epsilon} + U_{\alpha, l_0-\epsilon})$. We have
\begin{align}
    \frac{\partial D}{\partial \alpha} &= \frac{(l_0 + \epsilon)(1 - l_0 - \epsilon)}{(1-(l_0+\epsilon) + 2(l_0+\epsilon - 1)(1-\alpha))^2} - \frac{(l_0 - \epsilon)(1 - l_0 + \epsilon)}{(1-(l_0-\epsilon) + 2(l_0-\epsilon - 1)\alpha)^2} \\
    &= \frac{-(2l_0 - 1)[2\epsilon \alpha^2 + 2\alpha(l_0+\epsilon)(l_0 - \epsilon) - 2\alpha(l_0+\epsilon) - (l_0+\epsilon)(l_0 - \epsilon) + l_0+\epsilon] }{(1-(l_0+\epsilon) + 2(l_0+\epsilon - 1)(1-\alpha))^2(1-(l_0-\epsilon) + 2(l_0-\epsilon - 1)\alpha)^2} \\
    &=: -\frac{1}{A}[2\epsilon \alpha^2 + 2\alpha(l_0+\epsilon)(l_0 - \epsilon) - 2\alpha(l_0+\epsilon) - (l_0+\epsilon)(l_0 - \epsilon) + l_0+\epsilon], A>0
\end{align}
There exists $\alpha'$ s.t. $\frac{\partial D}{\partial \alpha}(\alpha, l_0, \epsilon) < 0$ when $\alpha < \alpha'$, $\frac{\partial D}{\partial \alpha}(\alpha, l_0, \epsilon) > 0$ when $\alpha > \alpha'$. When $\alpha = C_\alpha$,
\begin{equation}
    \frac{\partial D}{\partial \alpha}(C_\alpha, l_0, \epsilon) = \frac{2\epsilon(l_0+\epsilon)(l_0 - \epsilon)(1 - (l_0+\epsilon))(1 - (l_0-\epsilon))}{A[2(l_0+\epsilon)(l_0-\epsilon)-2l_0]^2} > 0
\end{equation}
Thus $D(\alpha, l_0, \epsilon) > D(C_\alpha, l_0, \epsilon)=0$ when $\alpha > C_\alpha$. Denote $C := l_0-\epsilon - \frac{l_0+\epsilon}{(l_0+\epsilon) + (1-l_0-\epsilon)\frac{\alpha}{1-\alpha}}$, we have $l_{opt} \in (l_0-\epsilon-C, U_{\alpha, l_0-\epsilon})$, as a result $C < \delta(l_0, \epsilon, \alpha) < 2\epsilon+C$. That ends our proof.

\end{proof}

\section{Proof of Theorem~2}
\begin{proof}
As $\tilde{Y}(X) = 0$ if and only if $\mathbb{P}_{f_B}(Y=0|X) > \mathbb{P}_{f_B}(Y=1|X)$. The later is equivalent to \[\mathbb{P}_{\mathcal{D}}(Y=0|X)/q^b_0(x) > \mathbb{P}_{\mathcal{D}}(Y=1|X)/q^b_1(x),\] 
equivalently
\[\mathbb{P}_{\mathcal{D}}(Y=0|X) > q^b_0(x).\]
As a result, when $\hat{Y}(X) \neq \tilde{Y}(X)$, we have
\[\mathbb{P}_\mathcal{D}(Y=\hat{Y}(x) |X = x) < q^b_{\hat{Y}(x)}(x)\]
Conversely, the above equation induces $\tilde{Y}(X) \neq \hat{Y}(X)$.
% As $\tilde{Y}(X) = j$ if and only if $\forall i \neq j$, $\mathbb{P}_{f_B}(Y=i|X) < \mathbb{P}_{f_B}(Y=j|X)$. The later is equivalent to $\forall i \neq j$, \[\mathbb{P}_{\mathcal{D}}(Y=i|X)/q^b_i(x) < \mathbb{P}_{\mathcal{D}}(Y=j|X)/q^b_j(x),\] 
% which equals
% \[\mathbb{P}_{\mathcal{D}}(Y=i|X)/\mathbb{P}_{\mathcal{D}}(Y=j|X) < q^b_i(x)/q^b_j(x).\]
% As a result, when $\hat{Y}(X) \neq \tilde{Y}(X)$, we have 
% \[\frac{\mathbb{P}_\mathcal{D}(Y=\hat{Y}(x) |X = x)}{ \mathbb{P}_\mathcal{D}(Y=j |X = x)} < q^b_{\hat{Y}(x)}(x) / q^b_j(x)\]
% Conversely, the above equation induces $\tilde{Y}(X) \neq \hat{Y}(X)$.
\end{proof}

\clearpage

\begin{figure}[!th]
  \begin{center}
   \includegraphics[scale=0.33]{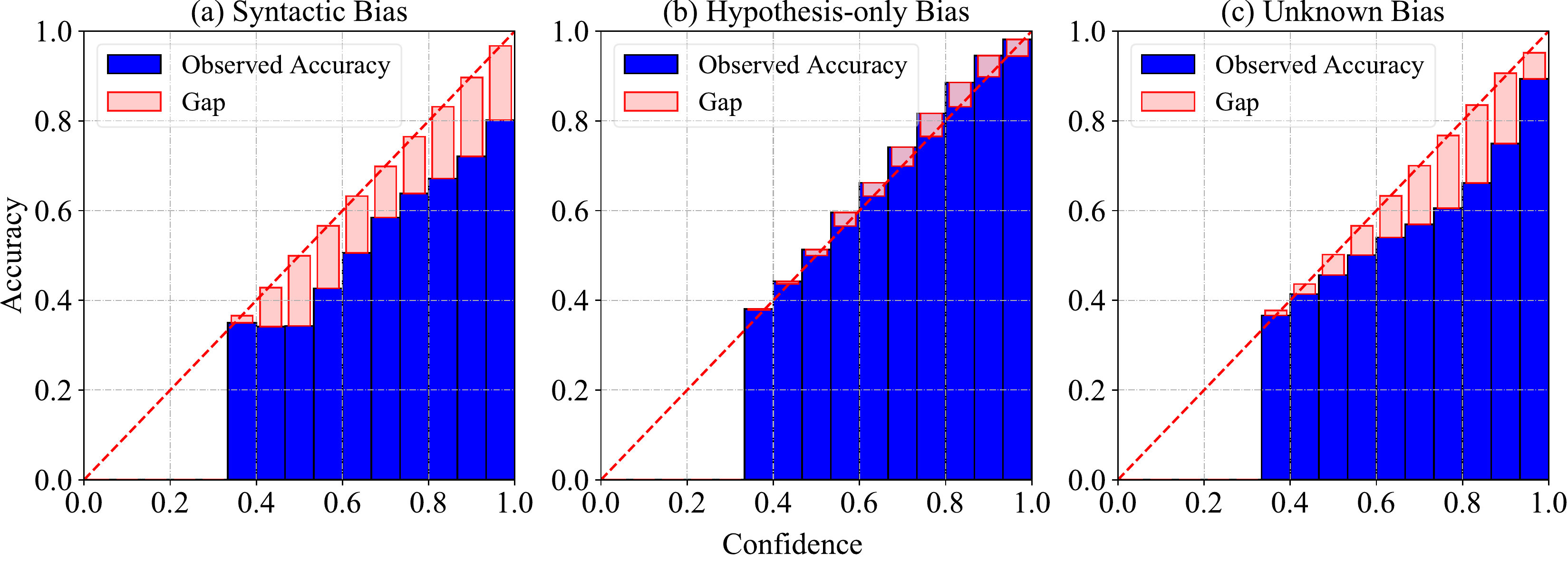}
    % \includegraphics[scale=0.33]{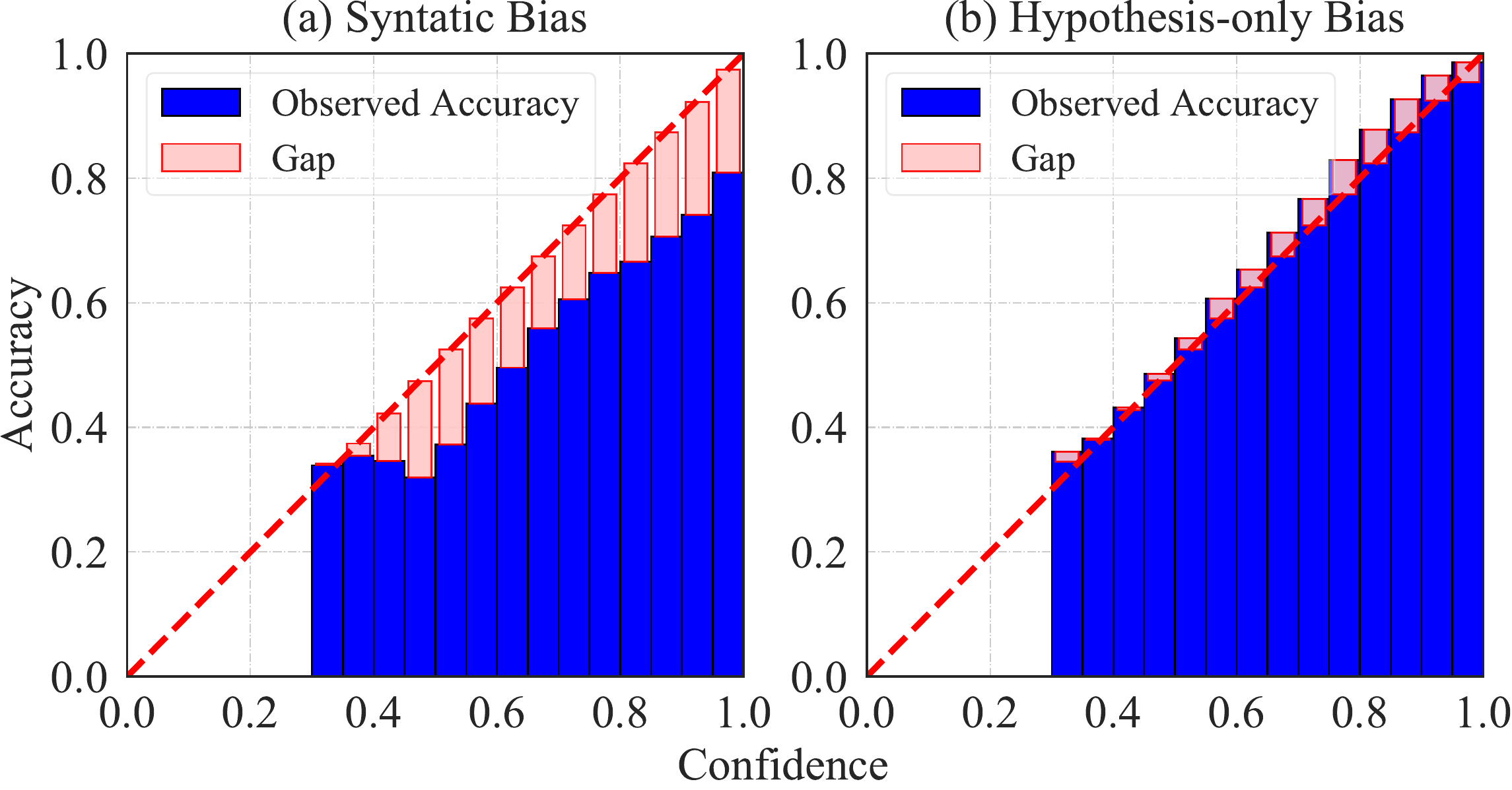}
    % \vspace{-5pt}
        \caption{Reliability diagrams of the bias-only models on MNLI. On MNLI, (a) the syntactic bias-only model and (c) the unknown bias-only model are over-confident, (b) the hypothesis-only bias-only model is under-confident. \label{fig:over-under}}
  \end{center}
\end{figure}

\section{Over- or Under-Confidence of the Bias-only Model on MNLI}
We plot the confidence-reliability diagram~\citep{guo2017calibration} of these three models in Figure \ref{fig:over-under}. The wide blue bars show the average accuracy of the bias-only model, and the narrow red bars show the gap between the average accuracy and the confidence of the bias-only model, i.e., the uncertainty estimation on the predicted class. For perfectly calibrated predictions, the curve in a reliability diagram should be as close as possible to the diagonal. Most of the blue bars below the diagonal indicate that the model is over-confident, otherwise is under-confident. It can be observed that the syntactic bias-only model and unknown bias-only model are over-confident, and the hypothesis-only bias-only model is under-confident.

\section{The Classification Accuracy of the Calibrated Bias-only Models}

To facilitate the study, we demonstrate the classification accuracy of the calibrated bias-only models on different training datasets, as shown in Table~\ref{table:acc-of-calibrate}. In the table, Un-Cal, Dirichlet, and TempS denote the bias-only model without calibration, with temperature scaling and Dirichlet calibrator, respectively. 
% From the results, we can see that calibrated bias-only models on different datasets achieve better uncertainty estimation, for both calibrators. Comparing the two calibrators, the Dirichlet calibrator performs better because of its higher expressive power. Further considering the debiasing improvement in Table~\ref{table:fever-main-result} and \ref{table:nli_main_result}, we can see that the empirical findings consist with our theory. 
\begin{table*}[!htbp]
    \begin{center}
    \caption{Accuracy of the calibrated bias-only models on different training datasets.}
    \setlength\tabcolsep{3.1 pt}
    \begin{tabular}{lcccc}
    \toprule
     & \textbf{FEVER} & \textbf{HANS} & \textbf{MNLI} & \textbf{Unknown}  \\ 
    \midrule
    Un-Cal & 60.6 & 54.8 & 63.8 & 63.2 \\ \midrule
    TempS & 60.6 & 54.8 & 63.8 & 63.2 \\ \midrule
    Dirichlet & 62.7 & 69.9 & 64.0 & 63.4 \\  \bottomrule 
    \end{tabular}
    \label{table:acc-of-calibrate}
    \end{center}
    % \captionsetup[table]{skip=50pt}
    %  \vspace{-10pt}
\end{table*}

\section{Experiment on Image Classification}
In image classification experiments, we validate the effectiveness of MoCaD on the texture bias in realistic images. 

\paragraph{Datasets}
We follow \citet{bahng2020learning} to conduct our experiment. The experiment is conducted on the 9-Class Imagenet dataset~\citep{bahng2020learning}, which is a subset of ImageNet~\citep{deng2009imagenet} containing 9 super-classes. The validation dataset and ImageNet-A~\citep{hendrycks2019natural} are used for evaluation. For the in-distribution validation dataset, an `unbiased' accuracy measurement is used to evaluate the debiasing performance, denoted as \texttt{Unbiased}. It first obtains the proxy ground truths $c\in \{1,\dots,K\}$ for texture bias using texture feature clustering. Then the dataset is grouped according to the texture-class combination $(c,y)$. The combination-wise accuracy $A_{c,y}$ is computed by $\text{Corr}(c,y)/\text{Pop}(c,y)$, where $\text{Corr}(c,y)$ is the number of correctly predicted samples in $(c,y)$ and $\text{Pop}(c, y)$ is the total number of samples in $(c, y)$. Finally, \texttt{Unbiased} is the mean accuracy over all $A_{c,y}$ where the population $\text{Pop}(c, y) > 10$. Specifically, the texture features are extracted from images by computing the gram matrices of low-layer feature maps to capture the edge and color cues. It uses the feature maps from layer \texttt{relu1\_2} of the ImageNet pre-trained \texttt{VGG16}~\citep{simonyan2015very}. The clustering process is done with the mini-batch k-means algorithm with $k=9$ and batch size 1024. As k-means clustering is non-convex, the clustering is repeated three times with different initialization, and the averaged performance across the three trials is reported. ImageNet-A~\citep{hendrycks2019natural} is a dataset of natural adversarial filtered images that fool ImageNet-trained ResNet50~\citep{he2016deep}. The images consist of many failure modes of networks when “frequently appearing background elements”~\citep{hendrycks2019natural} become erroneous cues for recognition.

\paragraph{Main Model and Bias-only Model} Following \citep{bahng2020learning}, the main model is a fully convolutional network followed by a global average pooling (GAP) layer and a linear classiﬁer. Specifically, ResNet-50 architecture~\citep{he2016deep} is adopted as the main model. The bias-only model is a CNN with smaller receptive ﬁelds, which is expected to biased towards texture bias. Specifically, it is a \texttt{BagNet}~\citep{brendel2019approximating}, which is a variant of the ResNet50 architecture, by replacing many 3 × 3 with 1 × 1 convolutions, thereby limiting the receptive field size of the topmost convolutional layer.

\begin{table*}[!htbp]
    % \setcaptionwidth{.96\textwidth}
    \centering
    \caption{Classification accuracy on image classification.}
    \begin{tabular}{lccc}
    % \begin{tabular*}{\hsize}{@{}@{\extracolsep{\fill}}lccccc@{}}
    \toprule
    \textbf{Method} & \textbf{ID} & \textbf{UnBiased} &  \textbf{ImageNet-A} \\ 
    % \genspace{0.6}{0.6}\hline\genspace{0.6}{0.6}
    \midrule
PoE & 94.6 {\scriptsize $\pm$ 0.2} & 94.3 {\scriptsize $\pm$ 0.3} & 31.8 {\scriptsize $\pm$ 1.9} \\ 
\textbf{PoE$_{\mbox{\scriptsize TempS}}$} & 94.7 {\scriptsize $\pm$ 0.3} & \textbf{94.5} {\scriptsize $\pm$ 0.3} & \textbf{31.9 {\scriptsize $\pm$ 1.1}} \\ 
\textbf{PoE$_{\mbox{\scriptsize Dirichlet}}$} & 94.6 {\scriptsize $\pm$ 0.4} & 94.3 {\scriptsize $\pm$ 0.4} & 30.5 {\scriptsize $\pm$ 1.2} \\ \midrule 
DRiFt & 94.6 {\scriptsize $\pm$ 0.2} & 94.4 {\scriptsize $\pm$ 0.3} & 31.9 {\scriptsize $\pm$ 0.8} \\ 
\textbf{DRiFt$_{\mbox{\scriptsize TempS}}$} & 94.8 {\scriptsize $\pm$ 0.4} & \textbf{94.4} {\scriptsize $\pm$ 0.4} & \textbf{32.5 {\scriptsize $\pm$ 1.2}} \\ 
\textbf{DRiFt$_{\mbox{\scriptsize Dirichlet}}$} & 94.5 {\scriptsize $\pm$ 0.2} & 94.3 {\scriptsize $\pm$ 0.2} & 32.4 {\scriptsize $\pm$ 1.0} \\ \midrule 
InvR & 94.5 {\scriptsize $\pm$ 0.4} & 94.1 {\scriptsize $\pm$ 0.5} & 31.6 {\scriptsize $\pm$ 0.3} \\ 
\textbf{InvR$_{\mbox{\scriptsize TempS}}$} & 94.3 {\scriptsize $\pm$ 0.1} & 93.8 {\scriptsize $\pm$ 0.1} & \textbf{32.2} {\scriptsize $\pm$ 1.5} \\ 
\textbf{InvR$_{\mbox{\scriptsize Dirichlet}}$} & 94.4 {\scriptsize $\pm$ 0.4} & \textbf{94.2} {\scriptsize $\pm$ 0.2} & 31.8 {\scriptsize $\pm$ 0.9} \\ \midrule 
LMin & 90.9 {\scriptsize $\pm$ 0.5} & 90.5 {\scriptsize $\pm$ 0.6} & 27.7 {\scriptsize $\pm$ 1.6} \\ 
\textbf{LMin$_{\mbox{\scriptsize TempS}}$} & 91.1 {\scriptsize $\pm$ 0.6} & 90.6 {\scriptsize $\pm$ 0.6} & \textbf{28.1} {\scriptsize $\pm$ 1.8} \\ 
\textbf{LMin$_{\mbox{\scriptsize Dirichlet}}$} & 91.2 {\scriptsize $\pm$ 0.2} & \textbf{90.9} {\scriptsize $\pm$ 0.2} & 26.1 {\scriptsize $\pm$ 0.8} \\ \bottomrule
    \end{tabular}
    % \end{tabular*}
\label{table:imagenet}
\end{table*}

\paragraph{Training Details and Configurations} 
We follow the configuration in \citep{bahng2020learning}: the batch size is set to 128; learning rates are initially set to 0.001 and are decayed by cosine annealing and training for 120 epochs. As advised by \citet{bahng2020learning}, we use AdamP optimizer~\citep{heo2020slowing} in the experiment. We experiment with 8 implementations of MoCaD, i.e.~two different calibrators combined with four different ensembling strategies as the same as in previous experiments. For Learned-Mixin, the entropy term weight is set to the value suggested by~\citep{bahng2020learning}. We run each experiment ﬁve times and report the mean scores and the standard deviations. For the Dirichlet calibrator, we use the same configuration as in FEVER.

\paragraph{Experimental Results}
Table \ref{table:imagenet} shows the experimental result on image classification. We can see that our MoCaD can achieve the best debiasing performance among all EBD methods, but the improvement is inconsistent. According to our theoretical analysis, that may because the invariant mechanism for image classiﬁcation task has a higher certainty (bigger $\alpha$), reducing the impact of calibration error on debiasing.

\end{document}